\documentclass[10pt,twocolumn,letterpaper]{article}

\usepackage[pagenumbers]{wacv} % arXiv / preprint with page numbers

\newcommand{\runpara}[1]{\par\vspace{0.35em}\noindent\textbf{#1}\enspace}

\usepackage{bm}
\usepackage{nicefrac}
\usepackage{amssymb}
\usepackage{amsthm}
\usepackage{stfloats}
\usepackage{multirow}
\newtheorem{proposition}{Proposition}
\providecommand{\cmark}{\checkmark}
\providecommand{\xmark}{--}

\newcommand{\sR}{\mathbb{R}}
\newcommand{\tH}{\bm{\mathcal{H}}}
\newcommand{\tS}{\bm{\mathcal{S}}}
\newcommand{\mK}{\mathbf{K}}
\newcommand{\mX}{\mathbf{X}}
\newcommand{\mP}{\mathbf{P}}

\newcommand{\vx}{\mathbf{x}}
\newcommand{\gL}{\mathcal{L}}
\DeclareMathOperator*{\argmax}{arg\,max}
\DeclareMathOperator*{\argmin}{arg\,min}

\definecolor{wacvblue}{rgb}{0.21,0.49,0.74}
\usepackage[pagebackref,breaklinks,colorlinks,allcolors=wacvblue]{hyperref}

\def\wacvPaperID{622} % Enter the WACV Paper ID here
\def\confName{WACV}
\def\confYear{2026}

\title{MEOM: Multi-View Expected-OKS Maximization \\for Human Pose Triangulation}

\author{Ziliang Xiong \quad Henglin Shi \quad Per-Erik Forss\'en\\
Computer Vision Laboratory, Department of Electrical Engineering,\\
Link\"oping University, Sweden\\
{\tt\small \{ziliang.xiong, henglin.shi, per-erik.forssen\}@liu.se}
}

\begin{document}
\maketitle
\begin{abstract}
Conventional algebraic triangulation solves 3D human pose estimation (HPE) from multi-view 2D keypoints.
The typical approach, decoding 2D keypoints from predicted heatmaps, is unreliable as heatmaps can be multimodal under occlusion, and collapsing them into single peaks discards their spatial distribution.
We seek to use the entire heatmap to estimate 3D poses more accurately, which requires solving two problems: how to robustly fuse heatmaps across views, and how to assess the reliability of heatmaps.
For the former, we introduce a novel objective, \textbf{M}ulti-view \textbf{E}xpected-\textbf{O}KS \textbf{M}aximization (\textbf{MEOM}), that locates a 3D joint where the views agree in probability mass.
For the latter, we adopt highest-density-region (HDR) calibration as a diagnostic of that mass, independently of distance-based metrics.
The proposed framework covers two settings, with and without 3D supervision.
Without 3D supervision, we optimize 3D poses from pretrained heatmap predictors by maximizing MEOM, achieving comparable performance with state-of-the-art methods that rely on larger backbones, temporal fusion, and simulated 3D data.
On ambiguous Human3.6M (H36MA) and occluded CMU Panoptic frames, the advantage is substantial.
When 3D labels are available, we train the model end-to-end with a combined MEOM and MSE loss,
achieving $19.11$\,mm absolute MPJPE on Human3.6M outperforming the state-of-the-art volumetric approach on absolute MPJPE at half the inference cost.
\end{abstract}

\section{Introduction}
\label{sec:intro}

\begin{figure}[!t]
  \centering
  \captionsetup{skip=10pt}
  \includegraphics[width=\linewidth,trim={0 30 0 50},clip]{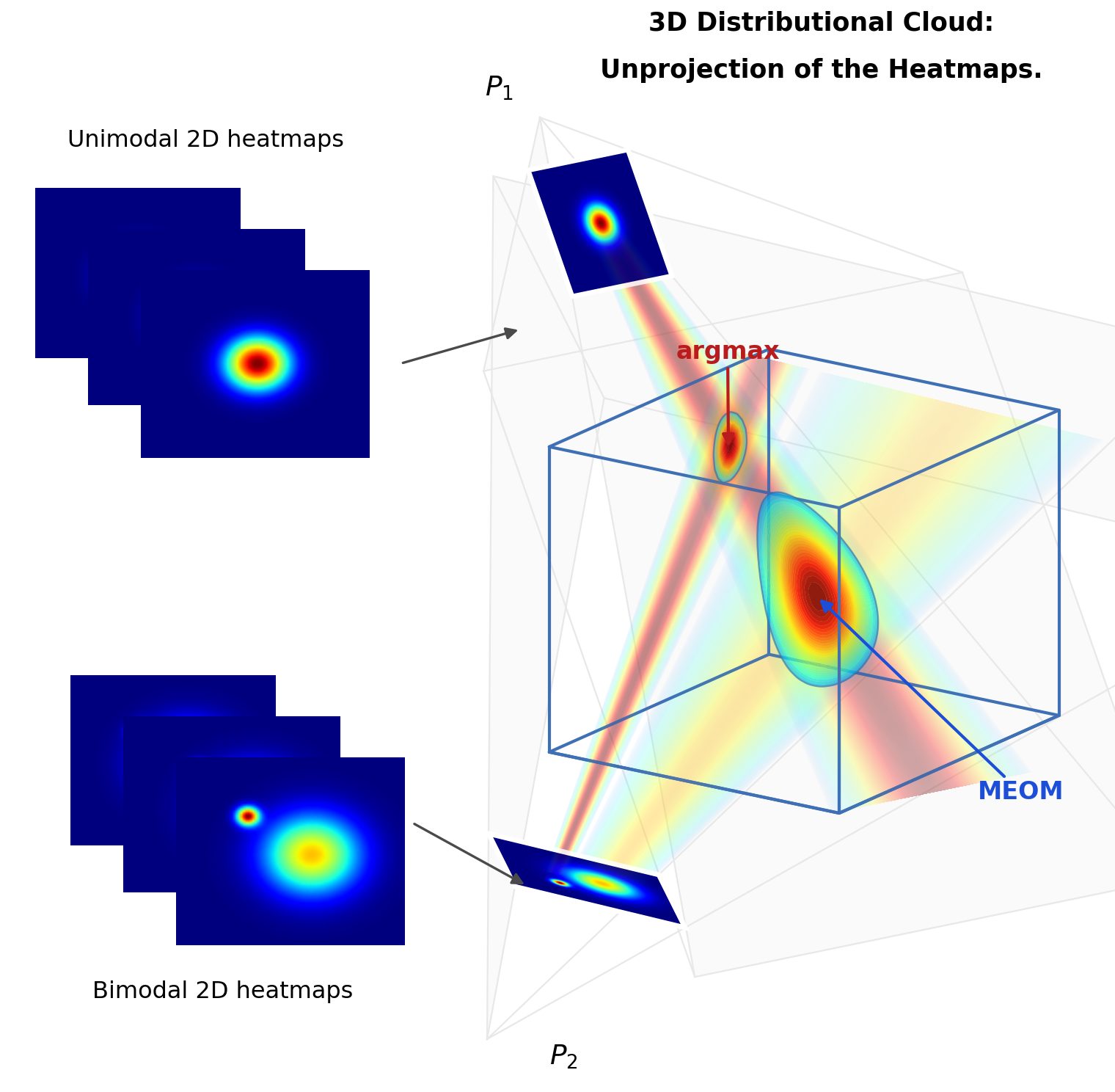}
  \caption{Multimodal heatmaps induce ambiguous triangulation.
  The upper view is unimodal; the lower view is bimodal, with a sharp outlier mode and a broader mode with most of the probability mass around.
  Our proposed MEOM selects a 3D point in the high-mass region of this implicit 3D density (illustration only).
  With argmax decoding, reprojection error and raw heatmap likelihood coincide, both selecting the sharp, low-mass intersection (red arrow).
  A 3D MSE loss instead collapses the implicit volume to isotropic Gaussians.}
  %\vspace{-5pt}
  \label{fig:intro-multimodal}
\end{figure}

Multi-view 3D HPE is classically solved by triangulation~\cite{hartley1997triangulation}: 2D keypoints are detected in each calibrated\footnote{Applied to a camera, ``calibrated'' means known intrinsics and extrinsics; applied to a predicted distribution, it means statistical reliability of its probability mass, which we make precise in Sec.~\ref{sec:method-hdr}.}
view, and the 3D joint is recovered by e.g., view-weighted direct linear transform (DLT)~\cite{hartley2003multiple}.
The geometry is explicit and the cost negligible, but every view is reduced to one point per joint: the spatial mass of each heatmap is collapsed before triangulation.
Volumetric methods keep that mass~\cite{joo2015panoptic,iskakov2019learnable,tu2020voxelpose}, but have cubic memory cost and learn a 3D prior tied to the capture geometry.
Fig.~\ref{fig:intro-multimodal} shows why fusion of collapsed peaks can be unreliable: each mode of a multimodal heatmap defines its own camera ray, so a point-based objective may settle on a sharp, low-mass intersection, as under ambiguity and occlusion.

Recent 2D HPE has moved away from heatmap collapse.
ProbPose~\cite{purkrabek2025probpose} treats each heatmap as a spatial distribution over the keypoint, calibrates it with temperature scaling, and reads the joint by Object Keypoint Similarity (OKS) decoding, which slightly improves 2D accuracy.
Similarly, we wish to use the entire heatmap for 3D HPE, which requires solving two problems: how to robustly fuse heatmaps across views into a 3D estimate, and how to assess whether the heatmap distribution is reliable.

For the former, we locate the 3D joint where the views agree in probability mass.
Each heatmap is convolved with a fixed OKS kernel, and a candidate 3D joint is scored by the aggregated response over cameras.
Maximizing that score is our proposed \textbf{M}ulti-view \textbf{E}xpected-\textbf{O}KS \textbf{M}aximization (\textbf{MEOM}).
The OKS kernel acts as a sliding window that seeks the largest probability mass rather than the highest peak. Thus, MEOM is a multi-view fusion method that optimizes 3D pose where the views agree in mass.

For the latter, we adopt highest-density-region (HDR) calibration~\cite{chung2024sampling} as a reliability diagnostic of mass distribution in heatmaps.
Because expected-OKS decoding reads and MEOM fuses probability mass, better-calibrated heatmaps are expected to result in more accurate triangulation.
HDR-ECE numerically summarizes the deviation from perfect HDR calibration.
Additionally, we identify that ProbPose-style map calibration is the discrete case of HDR calibration, and we further show that temperature scaling, which originally handles classification calibration~\cite{guo2017calibration}, locates the HDR calibration optimum, though it does not attain perfect calibration.

The MEOM score admits dual uses depending on the availability of 3D supervision.
When 3D supervision is not available, we freeze a pretrained heatmap predictor and optimize 3D joints: view-weighted DLT initialization is refined by maximizing MEOM.
When 3D supervision is available, we project the 3D labels to the predicted heatmaps and optimize them: after an MSE-only stage, a triangulator is trained end-to-end through differentiable expected-OKS decoding with a combined MEOM and 3D MSE loss, without subsequent MEOM 3D refinement.

We evaluate both settings on Human3.6M~\cite{ionescu2013human3}, its ambiguous subset H36MA~\cite{wehrbein2021probabilistic}, and CMU Panoptic~\cite{joo2015panoptic}.
When 3D supervision is not available, MEOM outperforms methods using reprojection error and heatmap likelihood as optimization targets, with larger gains on H36MA and occluded CMU Panoptic, and is comparable to methods that use larger backbones, temporal fusion, and simulated 3D data~\cite{upose3d2024}.
On CMU Panoptic, COCO-pretrained ProbPose with no target-domain 3D training remains competitive with the end-to-end algebraic baseline~\cite{iskakov2019learnable}.
When 3D supervision is available, MEOM surpasses state-of-the-art volumetric method~\cite{iskakov2019learnable} in absolute MPJPE at around half the inference cost (Tab.~\ref{tab:h36m-oks-mse}).\\

\paragraph{Contributions.}
\begin{enumerate}
    \item \textbf{MEOM score for algebraic triangulation.}
    We introduce MEOM for multi-view 3D HPE: a 3D candidate is scored by the aggregated Expected-OKS response across cameras, so fusion selects points on which the views agree in probability mass.

    \item \textbf{MEOM optimization without 3D labels.}
    With a frozen heatmap predictor, maximizing MEOM is competitive with methods that rely on larger backbones, temporal fusion, and simulated 3D data~\cite{upose3d2024}, without using 3D joint labels of the target dataset.

    \item \textbf{End-to-end MEOM training.}
    With 3D labels, an algebraic triangulator trained with expected-OKS decoding and a combined MEOM+MSE loss surpasses volumetric triangulation~\cite{iskakov2019learnable} in absolute MPJPE on Human3.6M at roughly half the inference cost.

    \item \textbf{HDR calibration for mass-based triangulation.}
    We compare quantile, HDR, and copula diagnostics under temperature scaling and measure how each predicts triangulation accuracy.
    On H36MA, over-sharp maps cost $9.6$\,mm Abs.\ MPJPE; the HDR-ECE-minimizing temperature is optimal for reprojection error and within $0.17$\,mm of the MEOM optimum.
\end{enumerate}

Sec.~\ref{sec:related} reviews related work.
Sec.~\ref{sec:method} presents MEOM, its two uses with and without 3D labels, and HDR calibration.
Sec.~\ref{sec:experiments} reports datasets, then frozen-network results, end-to-end training, and HDR diagnostics.
Sec.~\ref{sec:conclusion} concludes.

\section{Related Work}
\label{sec:related}

\begingroup
\renewcommand{\runpara}[1]{\par\vspace{0.1em}\noindent\textbf{#1}\enspace}
\runpara{3D multiview HPE}
Algebraic triangulation first detects 2D keypoints in each view and recovers 3D joints by a view-weighted DLT~\cite{hartley1997triangulation,hartley2003multiple}, either as post-hoc refinement under predicted 2D uncertainty~\cite{gundavarapu2019structured} or trained end-to-end through differentiable soft-argmax~\cite{iskakov2019learnable}.
It is fast and geometrically explicit, yet every heatmap is collapsed to one point and at most an additional scalar confidence.
Beyond point triangulation, Joo et al.~\cite{joo2015panoptic} discretize the 3D mocap space into a voxel grid and back-project each voxel center to all camera views, aggregating the corresponding heatmap scores into a 3D node-likelihood map by spatial voting to propose candidate 3D joints.
Subsequent work unprojects multi-view features into a voxel grid and fuses them with a 3D CNN, either for a single person with a learned body-shape prior~\cite{iskakov2019learnable} or for multi-person detection inside per-person cuboids~\cite{tu2020voxelpose}.
Voxel grids raise laboratory accuracy, but at cubic memory cost, and a 3D CNN trained on them can overfit to the capture geometry and camera layout.
Learned 2D-to-3D lifters further replace explicit triangulation with cross-view and temporal fusion networks~\cite{upose3d2024,chharia2025mvssm,rumpl2025ray}; strong benchmark numbers can still come with limited robustness when projective geometry is only implicit and training camera topologies are narrow.
Our method instead retains triangulation: Expected-OKS decoding and MEOM operate on predicted heatmaps without constructing a voxel grid or learning a 3D human prior.

\runpara{2D human pose estimation.}
Modern 2D HPE \cite{sun2019deep, chung2024sampling, xu2022vitpose, yuan2021hrformer, purkrabek2025probpose} localizes keypoints with dense heatmaps per joint, which represent the spatial uncertainty distribution over the keypoint.
HRNet~\cite{sun2019deep} maintains high-resolution representations throughout the network and remains a strong CNN baseline; ViTPose~\cite{xu2022vitpose} shows that plain vision transformers are competitive when trained at scale.
Both are typically supervised by pixelwise MSE against isotropic Gaussian heatmap targets, yet evaluated on COCO by mean Average Precision (mAP) under OKS~\cite{lin2014microsoft,gu2023calibration}.
OKS is a scale- and joint-normalized Gaussian similarity to the annotated keypoint.
OKS assigns a distinct per-joint scale (larger for hips, smaller for eyes), computed as the standard deviation of independent annotators repeatedly labeling the same people.
ProbPose~\cite{purkrabek2025probpose} instead predicts calibrated non-parametric heatmaps with temperature scaling, together with a presence score, and trains with OKS loss~\cite{maji2022yolo}.
We adopt the expected-OKS kernel to regularize each heatmap with the annotation noise, in both DLT initialization and MEOM fusion.
%We adopt its expected-OKS kernel, to convovle with each heatmap, as the building block for both DLT initialization and MEOM fusion.

\runpara{Uncertainty and calibration in pose estimation.}
In 3D HPE, aleatoric uncertainty has been modeled with structured Gaussians for weighted multi-view triangulation~\cite{gundavarapu2019structured}.
For monocular 2D-to-3D lifting, probabilistic methods represent ambiguous poses with normalizing flows~\cite{wehrbein2021probabilistic}, diffusion models~\cite{holmquist2023diffpose}, and flow matching~\cite{le2026fmpose}.
However, these methods do not directly evaluate the predicted distribution of 3D poses.
In 2D HPE, Gu et al.~\cite{gu2023calibration} show that standard confidences (e.g., heatmap maxima) are poorly calibrated to OKS, and propose a light post-hoc network to realign confidence with pose accuracy.
In our work, scalar confidence-to-OKS alignment is not enough: heatmaps must satisfy distributional calibration properties.
Bramlage et al.~\cite{bramlage2023plausible} instead regress parametric location uncertainties and apply post-hoc quantile recalibration~\cite{kuleshov2018accurate} independently per joint and axis.
Axis-wise quantile calibration is not enough as marginal calibration does not imply joint calibration~\cite{ziegel2014copula}.
Purkrabek et al.~\cite{purkrabek2025probpose} investigate heatmap calibration by temperature scaling~\cite{guo2017calibration}, yet do neither show the benefits of calibration nor theoretically prove its optimality.
A multivariate predictive distribution must first be reduced by a pre-rank that maps the forecast to a scalar~\cite{laajil2026calibrated}; different pre-ranks then induce quantile~\cite{kuleshov2018accurate}, HDR~\cite{chung2024sampling}, and copula~\cite{ziegel2014copula} calibration.
We therefore compare these diagnostics under inference-time temperature scaling and study how they correlate with multi-view triangulation accuracy.
%\enlargethispage{4\baselineskip}
\endgroup

\section{Method}
\label{sec:method}

\subsection{Motivation: why Expected OKS}
\label{sec:oks-theory}

Let $\mX\in\sR^3$ denote a 3D joint. Under skeletal and anthropometric constraints its distribution $p(\mX)$ lies on a low-dimensional, typically anisotropic manifold $\mathcal{M}\subset\sR^3$.
A calibrated camera with projection $\pi$ induces a heatmap as the projective marginal
\begin{equation}
  p(\vx)=\int_{\sR^3} p(\mX)\,\delta\big(\pi(\mX)-\vx\big)\,d\mX,
  \label{eq:proj-marginal}
\end{equation}
where $\vx\in\sR^2$ is the keypoint location in the image plane and $\delta$ is the Dirac delta that restricts the integral to the camera ray projecting to $\vx$, so $p(\vx)$ accumulates the 3D probability mass along that ray.
Training heatmaps with pixelwise MSE against a fixed isotropic Gaussian \cite{xu2022vitpose, sun2019deep} therefore misspecifies the marginal whenever uncertainty is elongated (depth ambiguity), truncated (crop boundary), or multimodal (occlusion).

Under a uniform spatial prior and conditional independence of views given $\mX$, with $v\in\{1,\ldots,V\}$ indexing the $V$ calibrated views and $I^{(v)}$ the image in view~$v$, the multi-view posterior satisfies
\begin{equation}
  \log p(\mX \mid \{I^{(v)}\})
  =
  \sum_{v=1}^{V}
  \log p\!\left(I^{(v)} \mid \mX\right)
  + C ,
  \label{eq:mv-loglik}
\end{equation}
where $C$ is a constant (See full derivation in Appendix~\ref{app:oks-mv-score}).
%A short formalization of this factorized score and its relation to an implicit field over $\mX$ appears in Appendix~\ref{app:oks-mv-score}.
Let $\tH^{(v)}$ denote the predicted spatial localization distribution (heatmap) on $\sR^{W\times H}$ in view~$v$, and let $\mP^{(v)}$ be the corresponding camera matrix with projection $\pi(\mP^{(v)},\cdot)$.
In practice, using $\tH^{(v)}$ directly in place of the view likelihood in~\eqref{eq:mv-loglik}, as a training loss or as a refinement objective, severely degrades performance due to sparsity.

To mitigate this, we follow ProbPose~\cite{purkrabek2025probpose} and convolve $\tH^{(v)}$ with an OKS kernel $\mK$ to obtain the expected-OKS response $\tS^{(v)}=\tH^{(v)}*\mK$.
The COCO OKS~\cite{lin2014microsoft} between a query location $\vx$ and a reference $\vx'$ for joint $k$ is given as
\begin{equation}
  \mathrm{OKS}_k(\vx,\vx')
  =
  \exp\!\left(
    -\frac{\|\vx-\vx'\|_2^{2}}{2 s^{2} \sigma_k^{2}}
  \right),
  \label{eq:oks-kernel}
\end{equation}
where $s$ is the subject scale and $\sigma_k$ is the fixed standard deviation for each keypoint.
%official per-keypoint constant.
When the kernel is discretized on a heatmap, the subject scale $s$ is implied by the activation-window resolution.
Then, we introduce the MEOM score by aggregating the expected-OKS responses of the 3D joint over views:
\begin{equation}
  \sum_{v=1}^{V}
  w^{(v)}\,
  \tS^{(v)}
  \!\Bigl(
    \pi\!\bigl(\mP^{(v)},\mX\bigr)
  \Bigr),
  \label{eq:meom-score}
\end{equation}
which is a practical surrogate for multi-view maximum a posteriori inference with finitely many views; here $w^{(v)}\ge 0$ is a view weight, usually predicted by the model.

\begin{figure*}[t]
  \centering
  \includegraphics[width=\linewidth,trim=0 0 0 40,clip]{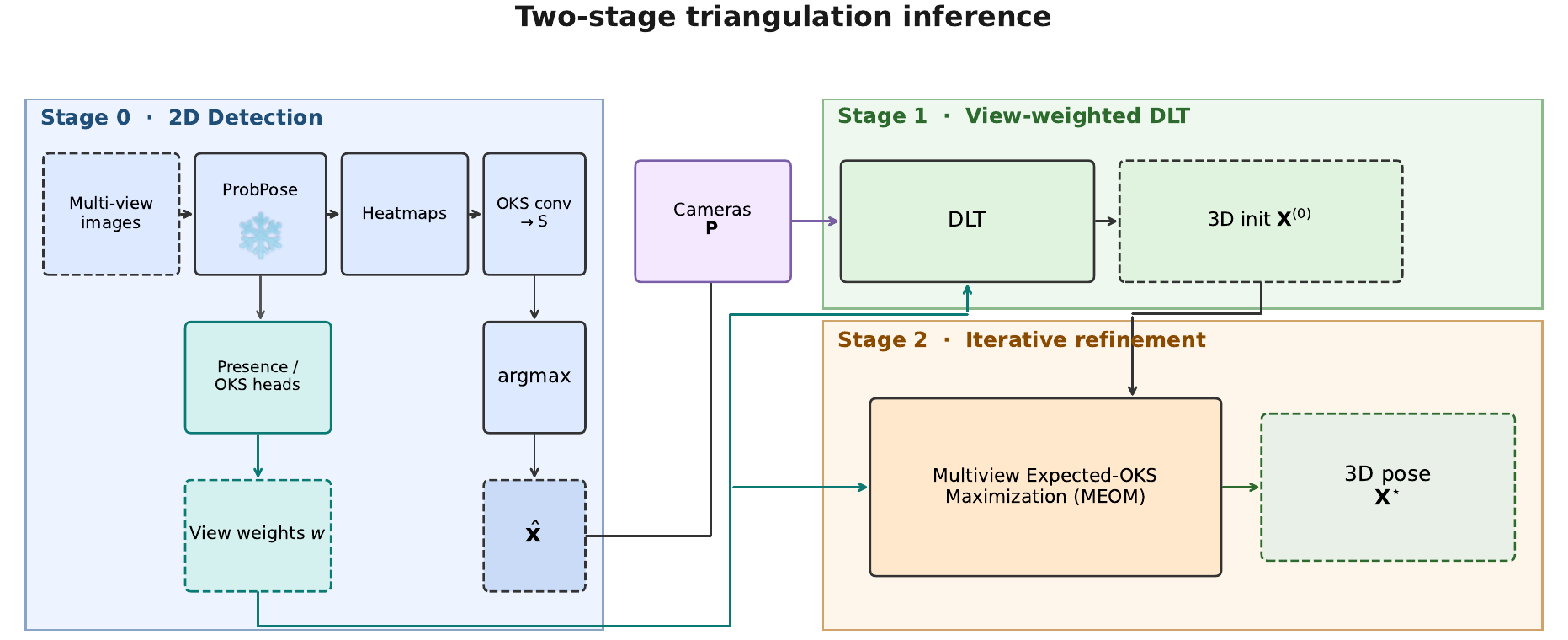}
  \vspace{-1.2em}
  \caption{MEOM triangulation without 3D labels.
  First, a frozen ProbPose~\cite{purkrabek2025probpose} produces per-view heatmaps and view weights; then view-weighted DLT gives an initial 3D pose estimation $\mX^{(0)}$, and iterative refinement finds $\mX^{*}$ that maximizes the MEOM score.}
  \label{fig:dlt-refine-inference}
\end{figure*}

The same score admits dual uses that differ in which variable is free: when 3D labels are unavailable, Sec.~\ref{sec:triangulation} freezes the heatmaps and finds $\mX^{*}$ that maximizes the MEOM score; when 3D labels are available, Sec.~\ref{sec:e2e} fixes $\mX=\mX^{\mathrm{gt}}$ and optimizes heatmaps (hence $\tS$) to raise that score.
Sec.~\ref{sec:method-hdr} further treats heatmap temperature and HDR calibration as a reliability diagnostic for both uses.

\subsection{MEOM triangulation without 3D labels}
\label{sec:triangulation}

This subsection studies triangulation with heatmaps fixed: 3D ground truth is typically available only in instrumented labs, so outside that setting a pretrained heatmap predictor is frozen and we optimize only the multi-view fusion of its detections via DLT initialization and iterative refinement. See the pipeline in Fig.~\ref{fig:dlt-refine-inference}.
We compare three algebraic optimization objectives for post-hoc refinement.
\par\vspace{\belowdisplayskip}
\noindent\textbf{Reprojection error.}
\label{sec:tri_reproj}
Minimize the weighted sum of reprojection residuals~\cite{gundavarapu2019structured}:
\begin{equation}
\begin{aligned}
    \mX_k^{*}
    &= \argmin_{\mX_k}\;
    \sum_{v=1}^{V} w_k^{(v)}\,
    \Bigl\|
      \hat{\vx}_k^{(v)}
      - \pi\!\bigl(\mP^{(v)}, \mX_k\bigr)
    \Bigr\|_2 ,
\end{aligned}
    \label{eq:tri_reproj}
\end{equation}
where indices $k$ and $v$ run over joints and calibrated views; $\mX_k$ is a candidate 3D joint, $\hat{\vx}_k^{(v)}$ the 2D network output, and $w_k^{(v)}$ the view weights.
\par\vspace{\belowdisplayskip}
\noindent\textbf{Heatmap likelihood.}
\label{sec:tri_heatmap}
Maximize the weighted localization likelihood of the projected point under the frozen heatmap $\tH_k^{(v)}$:
\begin{equation}
\begin{aligned}
    \mX_k^{*}
    &= \argmax_{\mX_k}\;
    \sum_{v=1}^{V}
    w_k^{(v)}\,
    \tH_k^{(v)}
    \Bigl(
      \pi\!\bigl(\mP^{(v)}, \mX_k\bigr)
    \Bigr) .
\end{aligned}
    \label{eq:tri_heatmap}
\end{equation}

\noindent\textbf{MEOM.}
The response $\tS$ of Sec.~\ref{sec:oks-theory} is shared by DLT initialization and MEOM refinement; $\hat{\vx}_k^{(v)}=\argmax \tS_k^{(v)}$ is used only to form the DLT initialization.
With heatmaps held fixed, refinement then maximizes Eq.~\eqref{eq:meom-score} over the 3D joint,
\begin{equation}
\begin{aligned}
    \mX_k^{*}
    &= \argmax_{\mX_k}\;
    \sum_{v=1}^{V}
    w_k^{(v)}\,
    \tS_k^{(v)}
    \Bigl(
      \pi\!\bigl(\mP^{(v)}, \mX_k\bigr)
    \Bigr) .
\end{aligned}
    \label{eq:tri_oks}
\end{equation}
In experiments, for all three objectives, we instantiate $w_k^{(v)}$ with either ProbPose presence probability or OKS prediction~\cite{purkrabek2025probpose} and compare both choices.

\subsection{End-to-end triangulation with MEOM}
\label{sec:e2e}

\begin{figure*}[t]
  \centering
  \includegraphics[width=\linewidth]{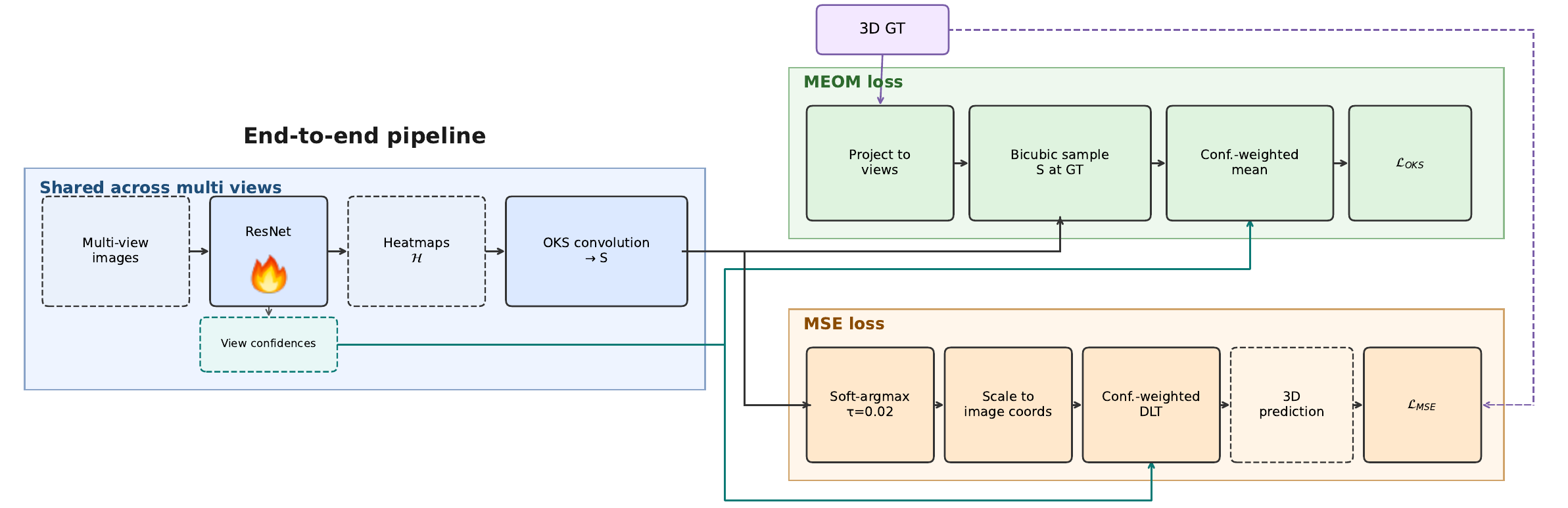}
  \vspace{-1.2em}
  \caption{End-to-end MEOM training.
  A shared backbone produces unit-scale heatmaps and view confidences.
  The MEOM loss $\gL_{\mathrm{MEOM}}$ samples $\tS$ at projections of the 3D ground truth and takes a confidence-weighted mean; the MSE loss $\gL_{\mathrm{MSE}}$ triangulates soft Expected-OKS detections by confidence-weighted DLT and compares to the same 3D ground truth.}
  \label{fig:forward-oks-mse}
\end{figure*}

Sec.~\ref{sec:triangulation} treats heatmaps as fixed observations and applies MEOM only at inference by searching over $\mX$.
When 3D supervision is available, we instead use the dual form of the MEOM score~\eqref{eq:meom-score}: fix $\mX=\mX^{\mathrm{gt}}$ and train the 2D network end-to-end to raise that score, jointly with a 3D MSE term that back-propagates through differentiable expected-OKS decoding and DLT (Fig.~\ref{fig:forward-oks-mse}).
Gradients from both losses flow into each view's heatmaps and act as a geometric pullback: probability mass is encouraged to concentrate where multi-view rays agree with the supervised 3D pose, without constructing an explicit voxel grid.

\subsubsection{Differentiable OKS decoding}
\label{sec:method-oks-soft}

Iskakov et al.~\cite{iskakov2019learnable} recover 2D joints by soft-argmax on raw heatmaps and then apply DLT.
Soft-argmax is convenient for training, but on multimodal heatmaps it returns the mass center that can fall between modes (expectation collapse).
We therefore keep the expected-OKS response, and decode it with temperatured-soft-argmax:
\begin{equation}
\begin{aligned}
  \hat{\vx}_k^{(v),\mathrm{soft}}
  &= \sum_{(x,y)}
    (x,y)\,
    \mathrm{softmax}
    \!\left(
      \frac{\tS_k^{(v)}}{\tau}
    \right)_{\!(x,y)} ,
\end{aligned}
  \label{eq:oks-soft}
\end{equation}
with a small temperature $\tau$.
As $\tau\to 0$, the soft weights concentrate on the dominant OKS peak, so the proxy tracks a mode rather than a broad center of mass on $\tH$ (We demonstrate this in Appendix~\ref{app:oks-mv-score}).
This soft path is used whenever 3D MSE must back-propagate through triangulation.

\subsubsection{Joint MSE and MEOM training}
\label{sec:method-oks-mse}

Under the projective marginal~\eqref{eq:proj-marginal}, a 3D MSE in \cite{iskakov2019learnable} on soft-decoded, triangulated joints is well matched only to a homoscedastic isotropic Gaussian prior on $\mX$; it misspecifies anisotropic or multimodal uncertainty.
As in Sec.~\ref{sec:oks-theory}, the two uses of the MEOM score~\eqref{eq:meom-score} differ in what is free.
%Without 3D ground truth, frozen-network MEOM searches over $\mX_k$ as in~\eqref{eq:tri_oks}.
With supervision, we fix $\mX_k=\mX_k^{\mathrm{gt}}$ and instead optimize the heatmaps and the predictor,
\begin{equation}
\begin{aligned}
    \{\tH_k^{(v)}\}^{*}
    &= \argmax_{\{\tH_k^{(v)}\}}\;
    \sum_{k}\sum_{v=1}^{V}
    w_k^{(v)}\,
    \tS_k^{(v)}
    \Bigl(
      \pi\!\bigl(\mP^{(v)}, \mX_k^{\mathrm{gt}}\bigr)
    \Bigr) .
\end{aligned}
    \label{eq:meom-train}
\end{equation}
%where $\tS_k^{(v)}=\tH_k^{(v)}*\mK_k$ and $w_k^{(v)}$ are predicted view weights.
Equivalently, we minimize the training loss
\begin{equation}
\begin{aligned}
  \gL_{\mathrm{MEOM}}
  &=
  -\sum_{k}\sum_{v=1}^{V}
  w_k^{(v)}\,
  \tS_k^{(v)}
  \Bigl(
    \pi\!\bigl(\mP^{(v)}, \mX_k^{\mathrm{gt}}\bigr)
  \Bigr) .
\end{aligned}
  \label{eq:meom-loss}
\end{equation}

Pure MEOM supervision can under-constrain absolute 3D scale and geometry. We therefore retain a 3D MSE term and train with the joint objective
\begin{equation}
\begin{aligned}
  \gL
  &= \lambda_{\mathrm{MEOM}}\,\gL_{\mathrm{MEOM}}
   + \lambda_{\mathrm{MSE}}\,\gL_{\mathrm{MSE}} ,
\end{aligned}
  \label{eq:joint-loss}
\end{equation}
where $\gL_{\mathrm{MSE}}$ is the smooth 3D keypoint MSE~\cite{iskakov2019learnable}. 
We train in two stages~\label{sec:method-two-stage}: Stage~1 uses MSE only ($\lambda_{\mathrm{MEOM}}=0$); Stage~2 fine-tunes with the joint loss~Eq.\eqref{eq:joint-loss}.
Unless stated otherwise, we omit subsequent MEOM 3D refinement after DLT so that reported gains are attributable to decoding and training rather than post-hoc optimization.

\subsection{Heatmap temperature and HDR calibration}
\label{sec:method-hdr}

Expected-OKS decoding and MEOM depend on the spatial mass of each heatmap, so poor calibration can distort triangulation.
A heatmap is a multivariate forecast, thus calibration requires a pre-rank that reduces it to a scalar~\cite{laajil2026calibrated}; different pre-ranks induce quantile~\cite{kuleshov2018accurate}, HDR~\cite{chung2024sampling}, and copula~\cite{ziegel2014copula} diagnostics.
Sec.~\ref{sec:results-hdr} compares all three under temperature scaling against triangulation accuracy.
Unlike distance-based metrics e.g., OKS~\cite{lin2014microsoft} and MPJPE, we adopt HDR-ECE because it scores the frequency of ground-truth keypoints falling in the highest-density region of the predicted mass at each level.
Similar to HDR, ProbPose states a requirement operationally for discrete heatmaps: the top $5\%$ of map mass should contain $5\%$ of ground-truth keypoints, the top $10\%$ should contain $10\%$, and so on.
We make this connection precise: ProbPose map calibration \emph{is} the discrete case of HDR calibration.

Following~\cite{chung2024sampling}, define the $\lambda$-density region of a predicted density $\hat{f}_{Y\mid x}$ by
$\mathrm{DR}_x(\lambda) := \{y : \hat{f}_{Y\mid x}(y) \geq \lambda\}$.
For each level $p\in(0,1)$, the highest-density region is the smallest such level set whose predictive mass is at least $p$,
\begin{equation}
\begin{aligned}
&\mathrm{HDR}_x(p) := \mathrm{DR}_x(\lambda^*) , \\
&\lambda^* = \sup \Bigl\{
      \lambda :
      P\!\bigl(
        \hat{Y} \in \mathrm{DR}_x(\lambda)
        \mid X{=}x
      \bigr)
      \ge p
    \Bigr\} .
\end{aligned}
\label{eq:hdr-def}
\end{equation}
HDR calibration requires
\begin{equation}
    P\bigl(Y \in \mathrm{HDR}_X(p)\bigr) = p
    \qquad\forall\, p \in (0, 1).
    \label{eq:hdr-calib}
\end{equation}

\begin{proposition}[ProbPose map calibration is HDR calibration]
\label{prop:probpose-hdr}
On a discrete spatial heatmap, sorting pixels by descending predicted density and taking the shortest prefix whose cumulative mass reaches a nominal level $p$ yields exactly $\mathrm{HDR}_x(p)$ in~\eqref{eq:hdr-def}.
Consequently, requiring that ground-truth keypoints fall in these prefixes at rate $p$ for all $p\in(0,1)$ is equivalent to~\eqref{eq:hdr-calib}.
\end{proposition}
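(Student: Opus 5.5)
We work on a finite pixel set $\Omega$ with predicted masses $h(y)\ge 0$, $\sum_{y\in\Omega}h(y)=1$. The predictive density is $\hat f_{Y\mid x}=h$ with respect to counting measure. The plan is to show that the sorted-prefix construction and the level-set definition~\eqref{eq:hdr-def} pick out the same set. The second claim then follows by substitution.

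\textbf{Step 1: prefixes and level sets.} Sort the pixels as $h(y_1)\ge h(y_2)\ge\dots\ge h(y_N)$. Define the cumulative mass $F(\lambda):=P(\hat Y\in\mathrm{DR}_x(\lambda))=\sum_{y:\,h(y)\ge\lambda}h(y)$. This function is non-increasing and left-continuous in $\lambda$, and it is piecewise constant with jumps only at the attained values $h(y_i)$. Each level set $\mathrm{DR}_x(\lambda)$ is therefore a prefix $\{y_1,\dots,y_m\}$ of the sorted list, where $m$ is the number of pixels with $h\ge\lambda$. Because $\sum_i h(y_i)=1$ and $p<1$, the set $\{\lambda: F(\lambda)\ge p\}$ is nonempty. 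By left-continuity and monotonicity it is an interval $(0,\lambda^*]$ with the supremum attained, and $\lambda^*$ equals one of the values $h(y_i)$. Hence $\mathrm{HDR}_x(p)=\mathrm{DR}_x(\lambda^*)$ is the largest prefix consisting of all pixels with $h\ge\lambda^*$.

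\textbf{Step 2: minimality.} Let $m^*$ be the length of the shortest prefix whose cumulative mass reaches $p$. We show $\mathrm{DR}_x(\lambda^*)=\{y_1,\dots,y_{m^*}\}$.
\begin{itemize}
\item \emph{The HDR set contains the prefix.} Suppose $h(y_{m^*})<\lambda^*$. Then the prefix of length $m^*$ is a strict subset of $\mathrm{DR}_x(\lambda^*)$. Every pixel outside it has density below $\lambda^*$, so $\mathrm{DR}_x(\lambda)$ for any $\lambda\in(h(y_{m^*}),\lambda^*]$ would be at most that prefix. This contradicts the choice of $\lambda^*$ as the level where the mass condition holds. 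So $h(y_{m^*})\ge\lambda^*$.
\item \emph{The prefix contains the HDR set.} Take $\lambda=h(y_{m^*})$. Then $F(\lambda)\ge p$, so $\lambda^*\ge h(y_{m^*})$.
\end{itemize}
Together these give $\lambda^*=h(y_{m^*})$.

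\textbf{Main obstacle: ties.} The hard part is ties in $h$. If $h(y_{m^*})=h(y_{m^*+1})$, the level set $\mathrm{DR}_x(\lambda^*)$ includes all tied pixels, while a shortest prefix under an arbitrary tie-breaking order stops earlier. The plan is to handle this by defining the prefix procedure as ``smallest prefix closed under ties'', i.e.\ taking whole blocks of equal density. This is the only ordering-independent version of the procedure, and under it the equality is exact. We would state explicitly that under generic (tie-free) heatmaps the two readings coincide. We would also note that the zero-density pixels never matter, since $p<1$.

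\textbf{Consequence.} With $\mathrm{HDR}_x(p)$ identified as the ProbPose prefix for every $x$ and every $p$, the ProbPose requirement ``ground truth falls in the top-$p$ mass prefix at rate $p$'' reads $P(Y\in\mathrm{HDR}_X(p))=p$ for all $p\in(0,1)$. This is literally~\eqref{eq:hdr-calib}, which proves the equivalence.
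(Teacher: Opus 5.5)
Your argument follows the paper's own proof (level sets of a discrete density are prefixes of the descending sort, the supremum in~\eqref{eq:hdr-def} selects the shortest such prefix with mass at least $p$, and the calibration equivalence follows by substitution), but you carry it out more carefully: you check that the supremum is attained, and you treat ties, which the paper's proof passes over (distinct scores are only assumed later, in Appendix~\ref{app:hdr}). One local slip is that the inclusion in the first bullet of Step~2 is reversed: $h(y_{m^*})<\lambda^*$ forces $\mathrm{DR}_x(\lambda^*)\subseteq\{y_1,\dots,y_{m^*-1}\}$, whose mass is below $p$ by minimality of $m^*$, contradicting $F(\lambda^*)\ge p$ from Step~1, whereas containment in the length-$m^*$ prefix, as you wrote it, gives no contradiction because that prefix already carries mass at least $p$.
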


\begin{proof}
Order the heatmap pixels so that $\hat{f}_{Y\mid x}(y_{(1)}) \ge \hat{f}_{Y\mid x}(y_{(2)}) \ge \cdots$.
For any threshold $\lambda$, $\mathrm{DR}_x(\lambda)$ is a prefix of this ordering.
The supremum $\lambda^*$ in~\eqref{eq:hdr-def} selects the shortest prefix whose predictive mass is at least $p$, which is the ``top-$p$ mass'' set used by ProbPose.
HDR calibration~\eqref{eq:hdr-calib} is therefore identical to requiring that these top-$p$ sets cover the ground truth with empirical frequency $p$.
\end{proof}

\paragraph{Temperature as a discrete HDR recalibrator.}
To improve heatmap calibration, ProbPose applies a temperature $T$ before spatial normalization, $\mathrm{softmax}(\tH^{(v)}/T)$, the post-hoc scalar that Guo et al.~\cite{guo2017calibration} use on classifiers.
Prop.~\ref{prop:probpose-hdr} makes~\eqref{eq:hdr-def} computable by sorting alone, so a heatmap needs neither density estimation nor continuity, and a one-dimensional sweep over $T$ is a cheaper surrogate for the learned mapping of Chung et al.~\cite{chung2024sampling} (Appendix~\ref{app:hdr}).
Flattening the softmax draws mass away from pixels denser than the ground truth, so per-level coverage error is V-shaped in $T$ and thus HDR-ECE has an interior minimum for each heatmap.
These per-map minima need not share an optimal $T$, but the network predictions tends be to over-or-underconfident in the same direction, so a shared scalar still yields a global HDR-ECE optimum (Appendix~\ref{app:hdr}).
The limitation is that temperature scaling never changes the density ordering thus cannot attain perfect calibraiton.
That shared $T$ is nevertheless the practical operating point for triangulation: over-sharp maps degrade 3D error, and the HDR-ECE-minimizing $T$ is optimal for reprojection-error refinement and lies on the MEOM accuracy plateau (Sec.~\ref{sec:results-hdr}).

\section{Experiments}
\label{sec:experiments}
\label{sec:results}

\subsection{Datasets and evaluation metrics}
\label{sec:datasets}

%on Human3.6M \cite{ionescu2013human3} and CMU Panoptic \cite{joo2015panoptic}
We follow the multi-view algebraic evaluation protocol in~\cite{iskakov2019learnable} by using \emph{absolute} MPJPE in global coordinates as the primary metric, and reporting pelvis-relative MPJPE and PA-MPJPE only as secondary.
Throughout the paper, expected-OKS decoding and MEOM use OKS kernels with the predefined per-joint constants $\sigma_k$ of the COCO protocol~\cite{lin2014microsoft}; retuning $\sigma_k$ for Human3.6M and CMU Panoptic may further improve accuracy, but it is outside the scope of this work.
\runpara{Human3.6M.} 
We test on subjects S9/S11 with the four standard cameras and the common 17-joint skeleton.
As in~\cite{iskakov2019learnable}, scenes with erroneous 3D annotations are removed from Human3.6M.
When 3D labels are unavailable (Sec.~\ref{sec:tri-ablation}), we subsample every 64th frame.
End-to-end training (Sec.~\ref{sec:results-oks-mse}) instead follows~\cite{iskakov2019learnable} by retaining every 5th frame with official lens undistortion.
We additionally evaluate on H36MA~\cite{wehrbein2021probabilistic}, an ambiguous subset of Human3.6M selected for occlusion and complex pose.
\runpara{CMU Panoptic.}
We use the Panoptic validation frame ranges and four-camera set of~\cite{iskakov2019learnable}.
Compared with Human3.6M, most CMU cameras do not see the full person for the entire sequence, leading to strong occlusions and missing body parts; the resulting incomplete or multimodal heatmaps require a robust objective to fuse across views.

\subsection{MEOM triangulation without 3D labels}
\label{sec:tri-ablation}

We evaluate the frozen-network pipeline of Sec.~\ref{sec:triangulation} on Human3.6M and CMU Panoptic.
Unlike Iskakov et al.~\cite{iskakov2019learnable}, we do \emph{not} undistort the images and project with the provided camera matrices as-is.
\runpara{Human3.6M.}
Because COCO and Human3.6M use different keypoint layouts, we fine-tune a COCO-pretrained ProbPose-small on Human3.6M 2D labels, then freeze it for triangulation.
We initialize with view-weighted DLT and refine for 80 Adam steps, using ProbPose OKS predictions as view weights at both stages.
Tab.~\ref{tab:h36m-upose3d} compares MEOM to multi-view methods that likewise use no 3D joint labels of the target dataset.
Tab.~\ref{tab:tri-every64} then ablates the refinement objective on Human3.6M and on H36MA under this frozen backbone.
See full ablation results in Appendix~\ref{app:tri-h36m-full}.

\begin{table}[t]
  \centering
  \caption{Performance of methods that do not use 3D joint labels on Human3.6M.
  DLT and UPose3D results are reported from~\cite{upose3d2024}, where absolute MPJPE is not reported.}
  \label{tab:h36m-upose3d}
  \scriptsize
  \setlength{\tabcolsep}{3pt}
  \resizebox{\linewidth}{!}{%
  \begin{tabular}{@{}llccccc@{}}
    \toprule
    Method & Backbone & Frames & MPJPE $\downarrow$ & Abs.\ $\downarrow$ & PA $\downarrow$ & Params (M) \\
    \midrule
    DLT~\cite{qiu2019cross} & ResNet-152 & 1 & $36.3$ & --- & --- & $68.6$ \\
    DLT~\cite{chen2018cascaded} & CPN & 1 & $30.5$ & --- & $27.6$ & $27.0$ \\
    UPose3D$^\dagger$~\cite{upose3d2024} & ResNet-152 & 1 & $31.0$ & --- & $29.0$ & $68.6$ \\
    UPose3D$^\dagger$~\cite{upose3d2024} & CPN & 1 & $26.9$ & --- & $24.1$ & $64.9$$^\ddagger$ \\
    UPose3D$^\dagger$~\cite{upose3d2024} & CPN & 27 & $26.4$ & --- & $23.4$ & $64.9$$^\ddagger$ \\
    MEOM (ours) & ProbPose-small (ViT-S) & 1 & $32.42$ & $36.04$ & $30.38$ & $24$ \\
    \bottomrule
  \end{tabular}%
  }

  \vspace{2pt}
  {\scriptsize $^\dagger$Trained on AMASS~\cite{mahmoodamass} with simulated 3D data.\\
  %$^\ddagger$UPose3D reports $64.87$\,M for their ResNet-152-based CPN.
  }
\end{table}

\begin{table}[t]
  \centering
  \caption{Triangulation results using three objectives without 3D labels on Human3.6M and H36MA~\cite{wehrbein2021probabilistic} ($N{=}6\,238$). Minimum per column in \textbf{bold}.}
  \label{tab:tri-every64}
  \label{tab:tri-h36ma}
  \scriptsize
  \setlength{\tabcolsep}{2.5pt}
  \resizebox{\linewidth}{!}{%
  \begin{tabular}{@{}lcccccc@{}}
    \toprule
    & \multicolumn{3}{c}{Human3.6M} & \multicolumn{3}{c}{H36MA} \\
    \cmidrule(lr){2-4}\cmidrule(lr){5-7}
    Method & MPJPE $\downarrow$ & Abs.\ $\downarrow$ & PA $\downarrow$ & MPJPE $\downarrow$ & Abs.\ $\downarrow$ & PA $\downarrow$ \\
    \midrule
    Init (DLT, unweighted) & 36.53 & 40.23 & 35.85 & 42.21 & 46.26 & 43.37 \\
    Init (DLT, OKS-weighted) & 34.27 & 37.94 & 32.71 & 37.12 & 41.28 & 36.17 \\
    \midrule
    Reprojection error & 34.69 & 38.11 & 32.89 & 37.84 & 41.70 & 36.63 \\
    Heatmap likelihood & 69.79 & 61.48 & 61.31 & 115.10 & 92.25 & 94.21 \\
    MEOM & \textbf{32.42} & \textbf{36.04} & \textbf{30.38} & \textbf{33.80} & \textbf{37.72} & \textbf{31.55} \\
    \bottomrule
  \end{tabular}%
  }
\end{table}

Against UPose3D~\cite{upose3d2024} (Tab.~\ref{tab:h36m-upose3d}), single-frame MEOM reaches $32.42$\,mm MPJPE ($36.04$\,mm absolute), close to their ResNet-152 result ($31.0$\,mm) and above DLT with the same backbone ($36.3$\,mm).
Their lower errors ($26.9$/$26.4$\,mm) come with temporal fusion and a compiler trained on AMASS~\cite{mahmoodamass} large-scale simulated 3D data, neither of which MEOM uses.
Among the three refinement objectives (Tab.~\ref{tab:tri-every64}), MEOM attains the lowest absolute MPJPE on both splits (36.04\,mm on Human3.6M; 37.72\,mm on H36MA), improving over OKS-weighted DLT and over reprojection error.
The gap is larger on H36MA, where ambiguous 2D detections make point-based fusion brittle: MEOM reduces absolute error by $4.0$\,mm relative to reprojection error, whereas heatmap-likelihood refinement collapses (92.25\,mm) and remains far worse than the initialization.
Raw heatmap likelihood is likewise weak on Human3.6M (61.48\,mm absolute): predicted heatmaps are sparse, with vanishing mass on most pixels, so maximizing $\tH$ at the projected location~\eqref{eq:tri_heatmap} stalls in flat zero-mass regions.

\runpara{ View-count sensitivity on H36MA.}
We subsample the first $V \in \{2,3,4\}$ cameras from the fixed four-camera setting on H36MA. 
Fig.~\ref{fig:h36ma-views-abs} reports absolute MPJPE versus $V$ (y-axis truncated above $60\,\mathrm{mm}$ for readability).
Full results are in Appendix~\ref{app:tri-h36m-full}.
Absolute error increases with fewer views; expected OKS remains the most accurate objective across all view counts.

\begin{figure}[t]
  \centering
  \includegraphics[width=0.85\linewidth,trim={0 0 0 22},clip]{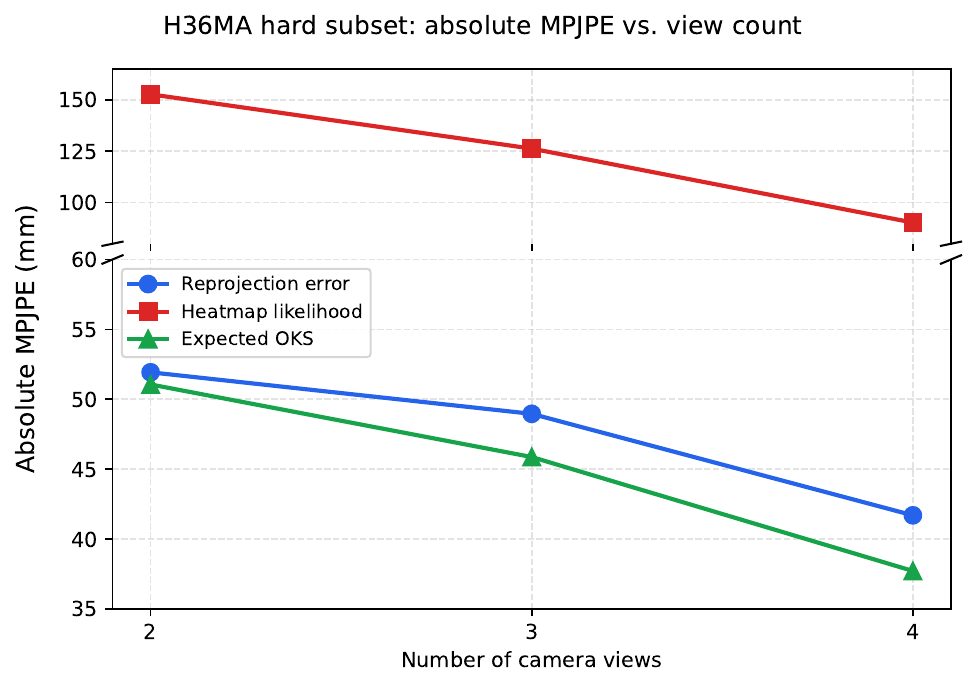}
  \caption{View-count sensitivity on H36MA: absolute MPJPE vs.\ number of camera views.
  Fixed setting per objective across 4/3/2 views (y-axis truncated above $60$\,mm).}
  \label{fig:h36ma-views-abs}
\end{figure}
\runpara{CMU Panoptic.}
\label{sec:cmu-tri-ablation}
On this set, 2D observations are frozen COCO-pretrained ProbPose heatmaps with \emph{no} CMU 2D or 3D fine-tuning; only algebraic DLT\,+\,refinement is evaluated.
Person crops use Mask R-CNN (MRCNN) detections following~\cite{iskakov2019learnable}.
Refinement objectives and view-weighting toggles match the Human3.6M protocol above.
Reported numbers from Iskakov et al.~\cite{iskakov2019learnable} are absolute MPJPE with four cameras (Tab.~3 in~\cite{iskakov2019learnable}).
Rankings among the three refinement objectives under the same frozen network are in Appendix~\ref{app:cmu-tri-mrcnn}.

\begin{table}[t]
  \centering
  \caption{Performance comparison on CMU Panoptic. Absolute MPJPE in millimeters. Results of Iskakov et al.\ are from~\cite{iskakov2019learnable} (Tab.~3). Minimum absolute MPJPE in \textbf{bold}.}
  \label{tab:cmu-lt-comparison}
  \footnotesize
  \setlength{\tabcolsep}{3pt}
  \begin{tabular}{@{}llcc@{}}
    \toprule
    Method & Backbone & 3D labels & Abs.\ MPJPE $\downarrow$ \\
    \midrule
    RANSAC~\cite{iskakov2019learnable} & \multirow{3}{*}{ResNet-152} & \multirow{3}{*}{\cmark} & 39.50 \\
    Algebraic~\cite{iskakov2019learnable} & & & 21.30 \\
    Volumetric~\cite{iskakov2019learnable} & & & \textbf{13.70} \\
    \midrule
    MEOM (ours) & ProbPose-small (ViT-S) & \xmark & 23.99 \\
    \bottomrule
  \end{tabular}
\end{table}

With a frozen COCO ProbPose backbone and \emph{no} CMU adaptation, MEOM reaches $23.99$\,mm absolute MPJPE---close to the algebraic baseline of Iskakov et al.~\cite{iskakov2019learnable} ($21.30$\,mm), which is trained end-to-end on the target domain, and far below their RANSAC triangulation ($39.50$\,mm).
This near-parity under a mismatched, frozen 2D network indicates that Expected-OKS fusion is already at the level of strong algebraic multi-view methods, and that MEOM remains effective when self-occlusion and multi-person occlusion corrupt per-view heatmaps.
The full ablation is in Appendix~\ref{app:cmu-tri-mrcnn}; Expected OKS again ranks first among the three refinement objectives under the frozen COCO ProbPose backbone.

\subsection{End-to-end triangulation with MEOM}
\label{sec:results-oks-mse}

We next train an algebraic triangulator end-to-end when 3D labels \emph{are} available (Sec.~\ref{sec:e2e}; $N{\approx}26\,610$).
The baselines are the official algebraic and volumetric checkpoints of Iskakov et al.~\cite{iskakov2019learnable}.
We use the same ResNet-152 backbone as their algebraic model (Tab.~\ref{tab:h36m-oks-mse}), replacing only their soft-argmax 2D decoding by soft Expected-OKS decoding~\eqref{eq:oks-soft} with $\tau{=}0.02$, followed by confidence-weighted DLT without post-hoc 3D refinement.
Training follows the two-stage schedule of Sec.~\ref{sec:method-two-stage}.
Tab.~\ref{tab:h36m-oks-mse} summarizes the main comparison; the volumetric and algebraic entries of Iskakov et al.~\cite{iskakov2019learnable} use the authors' official pretrained weights.

Stage~1 reaches $19.38$\,mm / $22.15$\,mm, already improving over the algebraic baseline, which indicates that simply switching to expected-OKS decoding is already beneficial.
Stage~2 with unfrozen confidences and the default MSE weight $\lambda_{\mathrm{MSE}}{=}100$ further improves to $19.11$\,mm absolute and $21.86$\,mm relative MPJPE---$0.06$\,mm below the official volumetric model on absolute error while remaining $0.86$\,mm relative above it.
The improvement brought by stage~2 over stage~1 shows that MEOM could further optimize the 3D joints.
A broader grid over $\lambda_{\mathrm{MSE}}$ (Appendix~\ref{app:h36m-mse-weight}) yields similar relative errors, indicating that our proposed loss remains stable.

These comparisons indicate that Expected-OKS-aligned algebraic training remains beneficial after lens undistortion, and that MSE+MEOM fine-tuning with an adaptable confidence head further improves over Stage~1, though volumetric aggregation still leads on relative MPJPE.
%Importantly, this gain does not come from lifting joints into a shared voxel grid or from a learned volumetric 3D prior of the kind used by Iskakov et al.~\cite{iskakov2019learnable}, which can overfit laboratory multi-camera setups; our pipeline stays in the continuous algebraic domain of projected OKS responses and DLT.
Stage~1 and Stage~2 use the same image-to-3D architecture at test time, so Tab.~\ref{tab:h36m-oks-mse} reports $158.2$\,GFLOPs for both, matching the official algebraic model ($158.0$) and remaining at roughly half the computation of volumetric triangulation ($301.4$).
The added Expected-OKS convolution is $0.162$\,GFLOPs---$919\times$ cheaper than unprojection and the voxel-to-voxel (V2V) 3D CNN ($149$\,GFLOPs); isolated-lift latency, memory, and the measurement protocol are in Appendix~\ref{app:flops-lift}.

\begin{table}[t]
  \centering
  \caption{Performance of \emph{end-to-end} algebraic triangulation on Human3.6M (every 5th frame). Absolute and root-relative MPJPE in millimeters. GFLOPs are fvcore inference multiply--adds. Best MPJPE in each error column in \textbf{bold}.}
  \label{tab:h36m-oks-mse}
  \footnotesize
  \setlength{\tabcolsep}{3pt}
  \begin{tabular}{@{}lccc@{}}
    \toprule
    Method & Abs. MPJPE\ $\downarrow$ & Rel. \ $\downarrow$ & GFLOPs $\downarrow$ \\
    \midrule
    LTHP (volumetric)~\cite{iskakov2019learnable}
      & 19.17 & \textbf{21.00} & 301.4 \\
    LTHP (algebraic)~\cite{iskakov2019learnable}
      & 19.89 & 22.59 & 158.0 \\
    \midrule
    Stage~1 (MSE, soft Exp-OKS)
      & 19.38 & 22.15 & 158.2 \\
    Stage~2 (MSE+MEOM)
      & \textbf{19.11} & 21.86 & 158.2 \\
    \bottomrule
  \end{tabular}
  \vspace{-1.2em}
\end{table}

\subsection{Temperature Scaling and HDR calibration}
\label{sec:results-hdr}
\label{sec:temp-sweep}
%\runpara{End-to-end ResNet HDR.}
%We next ask whether the unit-scale ResNet heatmaps produced by the two-stage models require a test-time temperature $T\neq 1$ for HDR calibration.
%On the same split we convert each joint heatmap to a spatial probability by $\mathrm{softmax}(\tH/T)$ \emph{before} OKS convolution and measure HDR-ECE against ground-truth projections.
%Fig.~\ref{fig:hdr-calib} compares Stage~1 and Stage~2 ($\lambda_{\mathrm{MSE}}{=}100$, unfrozen confidences) at $T{=}1$: Stage~1 attains a lower HDR-ECE ($0.012$ versus $0.025$), while Stage~2 remains reasonably calibrated.
%Sweeping the test-time heatmap multiplier does not change MPJPE under soft Expected-OKS decoding, which max-normalizes each map before the temperature soft-argmax; we therefore keep the training-time multiplier of $1$ at test time.
\runpara{Temperature sweep when 3D labels are unavailable.}
ProbPose's heatmap head has a temperature $T$ that controls heatmap calibration and sharpness.
We freeze the trained checkpoint and regenerate heatmaps at various $T$ on H36MA.
The full grid of calibration scores and the three refinement objectives are in Appendix~\ref{app:temp-h36ma}; here Fig.~\ref{fig:temp-hdr-argmin} reports HDR-ECE and NLL against MEOM and reprojection-error refinement.
\runpara{Better-calibrated map leads to better triangulation.}
On H36MA, HDR-ECE is unimodal in $T$ and lowest at $T{=}0.5$, while over-sharp maps degrade MEOM triangulation (Fig.~\ref{fig:temp-hdr-argmin}).
Reprojection-error refinement isolates the effect of calibration on decoding: it reads the heatmap only through the OKS-decoded 2D joint and the OKS view weights, and never evaluates the response map during 3D optimization.
Its absolute MPJPE is likewise unimodal in $T$ and attains its minimum exactly where HDR-ECE does, at $T{=}0.5$ (Fig.~\ref{fig:temp-hdr-argmin}, bottom left).
%The swing is only $0.73\,\mathrm{mm}$, and $T{=}0.5$ beats $T{=}0.75$ by just $0.01\,\mathrm{mm}$, so the evidence is the shared unimodal shape and the common argmin, not that one-sample margin.
This again shows that simply switching to expected-OKS decoding is already beneficial yet small.
MEOM reads probability mass rather than an isolated peak, so the same miscalibration is more costly: it peaks slightly later at $T{=}1.0$ but is within $0.17\,\mathrm{mm}$ of that optimum at the HDR-optimal $T{=}0.5$ (Fig.~\ref{fig:temp-hdr-argmin}, top left), i.e.\ inside a plateau spanning $T\in[0.5,1.5]$, while over-sharp maps cost up to $9.6\,\mathrm{mm}$.

\runpara{Which calibration score to trust.}
Relative to its own minimum, HDR-ECE spans an order of magnitude over the sweep, whereas NLL stays within about $2{\times}$ and is nearly flat near $T{=}0.75$ (Fig.~\ref{fig:temp-hdr-argmin}, right).
It shows that temperature scaling swiftly improves HDR calibration, which is more indicative to the mass distribution reliability than NLL.
Both scores place their optima on the MEOM plateau, but only HDR-ECE's sharp V coincides with the reprojection-error minimum at $T{=}0.5$.
In Appendix~\ref{app:temp-h36ma}, we further show that both marginal quantile calibration and copula calibration improve over large $T$ but fail to predict the MPJPE minimum for reprojection error and MEOM.

\begin{figure}[t]
  \centering
  \includegraphics[width=\linewidth,trim={0 0 0 10}]{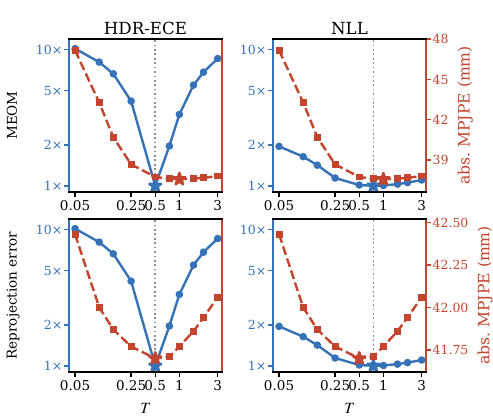}
  \vspace{-2.2em}
  \caption{Temperature scaling affects both heatmap calibration and triangulation accuracy.
    Calibration metrics relative to its minimum (blue, shared $1{\times}$--$10{\times}$ log scale) and refined absolute MPJPE (red) vs.\ ProbMapHead temperature $T$; stars mark each minimum. Reprojection error attains its optimum at the same $T{=}0.5$ that minimizes HDR-ECE; MEOM peaks at $T{=}1.0$, within $0.17\,\mathrm{mm}$ of its value at $T{=}0.5$.
  HDR-ECE spans an order of magnitude; NLL remains within about $2{\times}$ of its minimum.
  Note the different MPJPE ranges: $9.6\,\mathrm{mm}$ (MEOM) versus $0.73\,\mathrm{mm}$ (reprojection).}
  \label{fig:temp-hdr-argmin}
\end{figure}

\section{Conclusion}
\label{sec:conclusion}

We presented MEOM, an Expected-OKS objective that unifies 2D decoding and multi-view algebraic fusion: a 3D joint is inferred where the views agree in probability mass, without collapsing heatmaps to peaks or constructing a voxel grid.
With frozen 2D networks, maximizing MEOM outperforms point-based and likelihood refinement, especially under ambiguity and occlusion, and remains competitive with methods that use larger backbones, temporal fusion, and simulated 3D data.
When 3D labels are available, the dual form of the same score trains an algebraic triangulator that matches the volumetric method accuracy but at a substantially lower inference cost.

\enlargethispage{\baselineskip}
Because MEOM fuses probability mass, poorly calibrated heatmaps degrade triangulation.
A temperature that minimizes HDR-ECE is a practical operating point for both reprojection-error and MEOM refinement; NLL is less indicative of that mass reliability near the same optimum.
We therefore recommend HDR calibration as a first-class diagnostic for mass-based multi-view pose, to be reported alongside MPJPE.

{
    \small
    \bibliographystyle{ieeenat_abbrev}
    \bibliography{main}

@String(CVPR= {IEEE Conf. Comput. Vis. Pattern Recog.})

@String(ICCV= {Int. Conf. Comput. Vis.})

@String(ECCV= {Eur. Conf. Comput. Vis.})

@inproceedings{iskakov2019learnable,
  title={Learnable triangulation of human pose},
  author={Iskakov, Karim and Burkov, Egor and Lempitsky, Victor and Malkov, Yury},
  booktitle={Proceedings of the IEEE/CVF International Conference on Computer Vision},
  pages={7718--7727},
  year={2019}
}

@inproceedings{gundavarapu2019structured,
  title={Structured aleatoric uncertainty in human pose estimation},
  author={Gundavarapu, Nitesh B and Srivastava, Divyansh and Mitra, Rahul and Sharma, Abhishek and Jain, Arjun},
  booktitle={Proceedings of the IEEE/CVF Conference on Computer Vision and Pattern Recognition Workshops},
  year={2019}
}

@inproceedings{purkrabek2025probpose,
  title={Probpose: A probabilistic approach to 2d human pose estimation},
  author={Purkrabek, Miroslav and Matas, Jiri},
  booktitle={Proceedings of the Computer Vision and Pattern Recognition Conference},
  pages={27124--27133},
  year={2025}
}

@inproceedings{wehrbein2021probabilistic,
  title={Probabilistic monocular 3d human pose estimation with normalizing flows},
  author={Wehrbein, Tom and Rudolph, Marco and Rosenhahn, Bodo and Wandt, Bastian},
  booktitle={Proceedings of the IEEE/CVF International Conference on Computer Vision},
  pages={11199--11208},
  year={2021}
}

@inproceedings{holmquist2023diffpose,
  title={DiffPose: Multi-hypothesis Human Pose Estimation using Diffusion Models},
  author={Holmquist, Karl and Wandt, Bastian},
  booktitle={Proceedings of the IEEE/CVF International Conference on Computer Vision (ICCV)},
  pages={15977--15987},
  year={2023}
}

@article{le2026fmpose,
  title={Flow Matching for Probabilistic Monocular 3D Human Pose Estimation},
  author={Le, Cuong and Melnyk, Pavlo and Wandt, Bastian and Wadenb{\"a}ck, M{\aa}rten},
  journal={arXiv preprint arXiv:2601.16763},
  year={2026},
  note={Accepted to TMLR}
}

@inproceedings{chung2024sampling,
  title={Sampling-based multi-dimensional recalibration},
  author={Chung, Youngseog and Char, Ian and Schneider, Jeff},
  booktitle={Forty-first International Conference on Machine Learning},
  year={2024}
}

@article{gu2023calibration,
  title={On the Calibration of Human Pose Estimation},
  author={Gu, Kerui and Chen, Rongyu and Yao, Angela},
  journal={arXiv preprint arXiv:2311.17105},
  year={2023}
}

@inproceedings{joo2015panoptic,
  title={Panoptic Studio: A Massively Multiview System for Social Motion Capture},
  author={Joo, Hanbyul and Liu, Hao and Tan, Lei and Gui, Lin and Nabbe, Bart and Matthews, Iain and Kanade, Takeo and Nobuhara, Shohei and Sheikh, Yaser},
  booktitle={Proceedings of the IEEE International Conference on Computer Vision (ICCV)},
  pages={3334--3342},
  year={2015}
}

@inproceedings{upose3d2024,
  title={UPose3D: Uncertainty-Aware 3D Human Pose Estimation with Cross-View and Temporal Cues},
  author={Davoodnia, Vandad and Ghorbani, Saeed and Carbonneau, Marc-Andr{\'e} and Messier, Alexandre and Etemad, Ali},
  booktitle={European Conference on Computer Vision (ECCV)},
  year={2024}
}

@inproceedings{chharia2025mvssm,
  title={MV-SSM: Multi-View State Space Modeling for 3D Human Pose Estimation},
  author={Chharia, Aviral and Gou, Wenbo and Dong, Haoye},
  booktitle={Proceedings of the Computer Vision and Pattern Recognition Conference (CVPR)},
  pages={11590--11599},
  year={2025}
}

@article{rumpl2025ray,
  title={RUMPL: Ray-Based Transformers for Universal Multi-View 2D to 3D Human Pose Lifting},
  author={Ghasemzadeh, Seyed Abolfazl and Alahi, Alexandre and De Vleeschouwer, Christophe},
  journal={arXiv preprint arXiv:2512.15488},
  year={2025}
}

@article{ziegel2014copula,
  author  = {Ziegel, Johanna F. and Gneiting, Tilmann},
  title   = {Copula calibration},
  journal = {Electronic Journal of Statistics},
  volume  = {8},
  number  = {2},
  pages   = {2619--2638},
  year    = {2014},
  doi     = {10.1214/14-EJS964}
}

@inproceedings{laajil2026calibrated,
  title     = {Calibrated Multivariate Distributional Regression with Pre-Rank Regularization},
  author    = {Laajil, Aya and Zhalieva, Elnura and Desobry, Naomi and Ben Taieb, Souhaib},
  booktitle = {AISTATS 2026 Workshop on Towards Trustworthy Predictions: Theory and Applications of Calibration for Modern AI},
  year      = {2026},
  address   = {Tangier, Morocco},
  url       = {https://openreview.net/forum?id=249afd28cc1189658c3297ee834bb5235d05b693}
}

@inproceedings{kuleshov2018accurate,
  title={Accurate uncertainties for deep learning using calibrated regression},
  author={Kuleshov, Volodymyr and Fenner, Nathan and Ermon, Stefano},
  booktitle={International conference on machine learning},
  pages={2796--2804},
  year={2018},
  organization={PMLR}
}

@inproceedings{sun2019deep,
  title={Deep high-resolution representation learning for human pose estimation},
  author={Sun, Ke and Xiao, Bin and Liu, Dong and Wang, Jingdong},
  booktitle={2019 IEEE/CVF conference on computer vision and pattern recognition (CVPR)},
  pages={5686--5696},
  year={2019},
  organization={IEEE}
}

@article{xu2022vitpose,
  title={Vitpose: Simple vision transformer baselines for human pose estimation},
  author={Xu, Yufei and Zhang, Jing and Zhang, Qiming and Tao, Dacheng},
  journal={Advances in neural information processing systems},
  volume={35},
  pages={38571--38584},
  year={2022}
}

@inproceedings{lin2014microsoft,
  title     = {Microsoft {COCO}: Common Objects in Context},
  author    = {Lin, Tsung-Yi and Maire, Michael and Belongie, Serge and Hays, James and Perona, Pietro and Ramanan, Deva and Doll{\'a}r, Piotr and Zitnick, C. Lawrence},
  booktitle = {European Conference on Computer Vision (ECCV)},
  year      = {2014}
}

@article{ionescu2013human3,
  title={Human3. 6m: Large scale datasets and predictive methods for 3d human sensing in natural environments},
  author={Ionescu, Catalin and Papava, Dragos and Olaru, Vlad and Sminchisescu, Cristian},
  journal={IEEE transactions on pattern analysis and machine intelligence},
  volume={36},
  number={7},
  pages={1325--1339},
  year={2013},
  publisher={IEEE}
}

@article{yuan2021hrformer,
  title={Hrformer: High-resolution vision transformer for dense predict},
  author={Yuan, Yuhui and Fu, Rao and Huang, Lang and Lin, Weihong and Zhang, Chao and Chen, Xilin and Wang, Jingdong},
  journal={Advances in neural information processing systems},
  volume={34},
  pages={7281--7293},
  year={2021}
}

@inproceedings{tu2020voxelpose,
  title={Voxelpose: Towards multi-camera 3d human pose estimation in wild environment},
  author={Tu, Hanyue and Wang, Chunyu and Zeng, Wenjun},
  booktitle={European conference on computer vision},
  pages={197--212},
  year={2020},
  organization={Springer}
}

@inproceedings{bramlage2023plausible,
  title={Plausible uncertainties for human pose regression},
  author={Bramlage, Lennart and Karg, Michelle and Curio, Crist{\'o}bal},
  booktitle={2023 IEEE/CVF International Conference on Computer Vision (ICCV)},
  pages={15087--15096},
  year={2023},
  organization={IEEE}
}

@inproceedings{guo2017calibration,
  title={On calibration of modern neural networks},
  author={Guo, Chuan and Pleiss, Geoff and Sun, Yu and Weinberger, Kilian Q},
  booktitle={International conference on machine learning},
  pages={1321--1330},
  year={2017},
  organization={PMLR}
}

@article{hartley1997triangulation,
  title={Triangulation},
  author={Hartley, Richard I. and Sturm, Peter},
  journal={Computer Vision and Image Understanding},
  volume={68},
  number={2},
  pages={146--157},
  year={1997}
}

@book{hartley2003multiple,
  title={Multiple View Geometry in Computer Vision},
  author={Hartley, Richard and Zisserman, Andrew},
  edition={2},
  publisher={Cambridge University Press},
  year={2003}
}

@inproceedings{mahmoodamass,
  title={Amass: Archive of motion capture as surface shapes. In 2019 IEEE},
  author={Mahmood, Naureen and Ghorbani, Nima and Troje, Nikolaus F and Pons-Moll, Gerard and Black, Michael},
  booktitle={CVF International Conference on Computer Vision (ICCV)},
  pages={5441--5450},
  year={2019}
}

@inproceedings{qiu2019cross,
  title={Cross view fusion for 3d human pose estimation},
  author={Qiu, Haibo and Wang, Chunyu and Wang, Jingdong and Wang, Naiyan and Zeng, Wenjun},
  booktitle={2019 IEEE/CVF International Conference on Computer Vision (ICCV)},
  pages={4341--4350},
  year={2019},
  organization={IEEE}
}

@inproceedings{chen2018cascaded,
  title={Cascaded pyramid network for multi-person pose estimation},
  author={Chen, Yilun and Wang, Zhicheng and Peng, Yuxiang and Zhang, Zhiqiang and Yu, Gang and Sun, Jian},
  booktitle={2018 IEEE/CVF Conference on Computer Vision and Pattern Recognition},
  pages={7103--7112},
  year={2018},
  organization={IEEE}
}

@inproceedings{maji2022yolo,
  title={Yolo-pose: Enhancing yolo for multi person pose estimation using object keypoint similarity loss},
  author={Maji, Debapriya and Nagori, Soyeb and Mathew, Manu and Poddar, Deepak},
  booktitle={2022 IEEE/CVF Conference on Computer Vision and Pattern Recognition Workshops (CVPRW)},
  pages={2636--2645},
  year={2022},
  organization={IEEE}
}
}

\maketitlesupplementary
\appendix
\section{Multi-view OKS as a factorized score}
\label{app:oks-mv-score}

This appendix records notation shared by Sec.~\ref{sec:triangulation} in the main paper, the Expected-OKS decoding formulas, and a short probabilistic reading of multi-view fusion used in Sec.~\ref{sec:oks-theory} in the main paper.
The likelihood discussion is a modeling justification for finite $V$, not a claim that infinite cameras reconstruct the true 3D density by tomographic inversion.

\paragraph{Notation.}
\label{app:notation}
Unless stated otherwise, $k\in\{1,\ldots,K\}$ indexes joints and $v\in\{1,\ldots,V\}$ indexes views.
We write $\mX_k\in\sR^3$ for a candidate 3D joint position, with DLT initialization $\mX^{(0)}$ and refined estimate $\mX^{*}$ as in Fig.~\ref{fig:dlt-refine-inference} in the main paper; $\hat{\vx}_k^{(v)}\in\sR^2$ for the 2D network output in view~$v$ (Fig.~\ref{fig:dlt-refine-inference} in the main paper: $\hat{\vx}$); $\mP^{(v)}$ for the camera matrix and $\pi(\mP^{(v)},\cdot)$ for its projection; $w_k^{(v)}$ for the view weight in DLT and refinement; $\tH_k^{(v)}$ for the predicted heatmap on $\sR^2$; and $\tS_k^{(v)}$ for the Expected-OKS response map used by MEOM.

\paragraph{Factorized multi-view score.}
Let $\mX\in\sR^3$ be a 3D joint and write $\mathcal{I}=\{I^{(v)}\}_{v=1}^{V}$ for the calibrated views.
Bayes' rule gives the posterior
\begin{equation}
  p(\mX \mid \mathcal{I})
  =
  \frac{p(\mathcal{I}\mid\mX)\,p(\mX)}{p(\mathcal{I})} .
  \label{eq:bayes-3d}
\end{equation}
Conditional independence of the views given $\mX$ factorizes the joint image likelihood into per-view terms,
\begin{equation}
  p(\mathcal{I}\mid\mX)
  =
  \prod_{v=1}^{V} p\!\left(I^{(v)}\mid\mX\right) .
  \label{eq:view-factor}
\end{equation}
Under a uniform spatial prior, $p(\mX)$ does not depend on $\mX$ in the region of interest, so taking logarithms yields Eq.~\eqref{eq:mv-loglik} in the main paper:
\begin{equation}
\begin{aligned}
  \log p(\mX \mid \mathcal{I})
  &=
  \sum_{v=1}^{V}
  \log p\!\left(I^{(v)} \mid \mX\right)
  + C ,
\end{aligned}
\end{equation}
where $C=\log p(\mX)-\log p(\mathcal{I})$ collects the $\mX$-independent prior and evidence.
Modeling
\begin{equation}
  p\!\left(I^{(v)}\mid\mX\right)
  \propto
  \tS_k^{(v)}
  \!\Bigl(
    \pi\!\bigl(\mP^{(v)},\mX\bigr)
  \Bigr)
\end{equation}
for the joint of interest yields the MEOM aggregation maximized in Eq.~\eqref{eq:tri_oks} in the main paper
(up to monotone transforms and optional view weights $w_k^{(v)}$).
Equivalently, one may maximize
\begin{equation}
\begin{aligned}
  &\sum_{v=1}^{V}
  \log
  \tS_k^{(v)}
  \!\Bigl(
    \pi\!\bigl(\mP^{(v)},\mX\bigr)
  \Bigr)
\end{aligned}
\end{equation}
as an implicit continuous field over $\mX$ without voxelization; our method evaluates related objectives via 2D OKS decoding and algebraic DLT/refinement rather than dense 3D search.

\paragraph{OKS kernel.}
\label{app:oks-kernel}
The COCO OKS~\cite{lin2014microsoft} between a query location $\vx$ and a reference $\vx'$ for joint $k$ is given by Eq.~\eqref{eq:oks-kernel} in the main paper:
\begin{equation}
  \mathrm{OKS}_k(\vx,\vx')
  =
  \exp\!\left(
    -\frac{\|\vx-\vx'\|_2^{2}}{2 s^{2} \sigma_k^{2}}
  \right),
  \label{eq:oks-kernel-re}
\end{equation}
where $s$ is the subject scale and $\sigma_k$ is the official per-keypoint constant (larger for hips and shoulders, smaller for eyes and wrists).
We use the COCO $\sigma_k$ without retuning; when the kernel is discretized on a heatmap, $s$ is the subject scale implied by the activation-window resolution.

\paragraph{Expected-OKS response and hard decoding.}
\label{app:oks-decode}
Following ProbPose~\cite{purkrabek2025probpose}, let $p_L^{(v)}$ be the localization distribution on the activation window $AW$.
For each query pixel $(x, y) \in AW$, the Expected-OKS response is the expectation of Eq.~\eqref{eq:oks-kernel-re} under $p_L^{(v)}$ 
\begin{equation}
\begin{aligned}
    \tS_k^{(v)}(x, y)
    &= \sum_{(x', y') \in AW}
      p_L^{(v)}(x', y') \\
    &\qquad\cdot\,
      \mathrm{OKS}_k\bigl((x, y),\, (x', y')\bigr),
\end{aligned}
    \label{eq:oks_map}
\end{equation}
or equivalently $\tS_k^{(v)}=\tH_k^{(v)}*\mK_k$ with the discretized kernel $\mK_k(\vx)=\mathrm{OKS}_k(\vx,\mathbf{0})$,
\begin{equation}
  \tS_k^{(v)} = \tH_k^{(v)} * \mK_k ,
  \label{eq:oks-conv}
\end{equation}
Hard decoding reads
\begin{equation}
  \hat{\vx}_k^{(v)} = \argmax_{(x,y)} \tS_k^{(v)}(x,y) ,
  \label{eq:oks-argmax}
\end{equation}
The response $\tS_k^{(v)}$ is shared by DLT initialization and by MEOM (Eq.~\eqref{eq:tri_oks} in the main paper) in the no 3D supervision pipeline; hard-decoded $\hat{\vx}_k^{(v)}$ is used only to initialize DLT.
Soft Expected-OKS decoding in Sec.~\ref{sec:method-oks-soft} in the main paper likewise operates on $\tS_k^{(v)}$.

\paragraph{Temperature soft-argmax on the OKS response.}
The soft Expected-OKS decoding in Eq.~\eqref{eq:oks-soft} in the main paper applies $\mathrm{softmax}(\tS_k^{(v)}/\tau)$ on the OKS response.
If two modes of $\tS_k^{(v)}$ differ by a margin $\epsilon>0$, their soft weight ratio scales as $\mathrm{e}^{\epsilon/\tau}$.
As $\tau\to 0$, secondary modes are exponentially suppressed and the soft mean approaches $\argmax \tS_k^{(v)}$, i.e., a differentiable mode tracker on the OKS map.
Operating on $\tS_k^{(v)}=\tH_k^{(v)}*\mK_k$ rather than on raw $\tH_k^{(v)}$ further attenuates single-pixel spikes before this selection, which mitigates (but does not categorically eliminate) expectation collapse under multimodality.
Figures~\ref{fig:oks-decoding} and~\ref{fig:oks-soft-argmax} illustrate these contrasts on synthetic $64{\times}48$ heatmaps.

\begin{figure}[t]
  \centering
  \includegraphics[width=\linewidth]{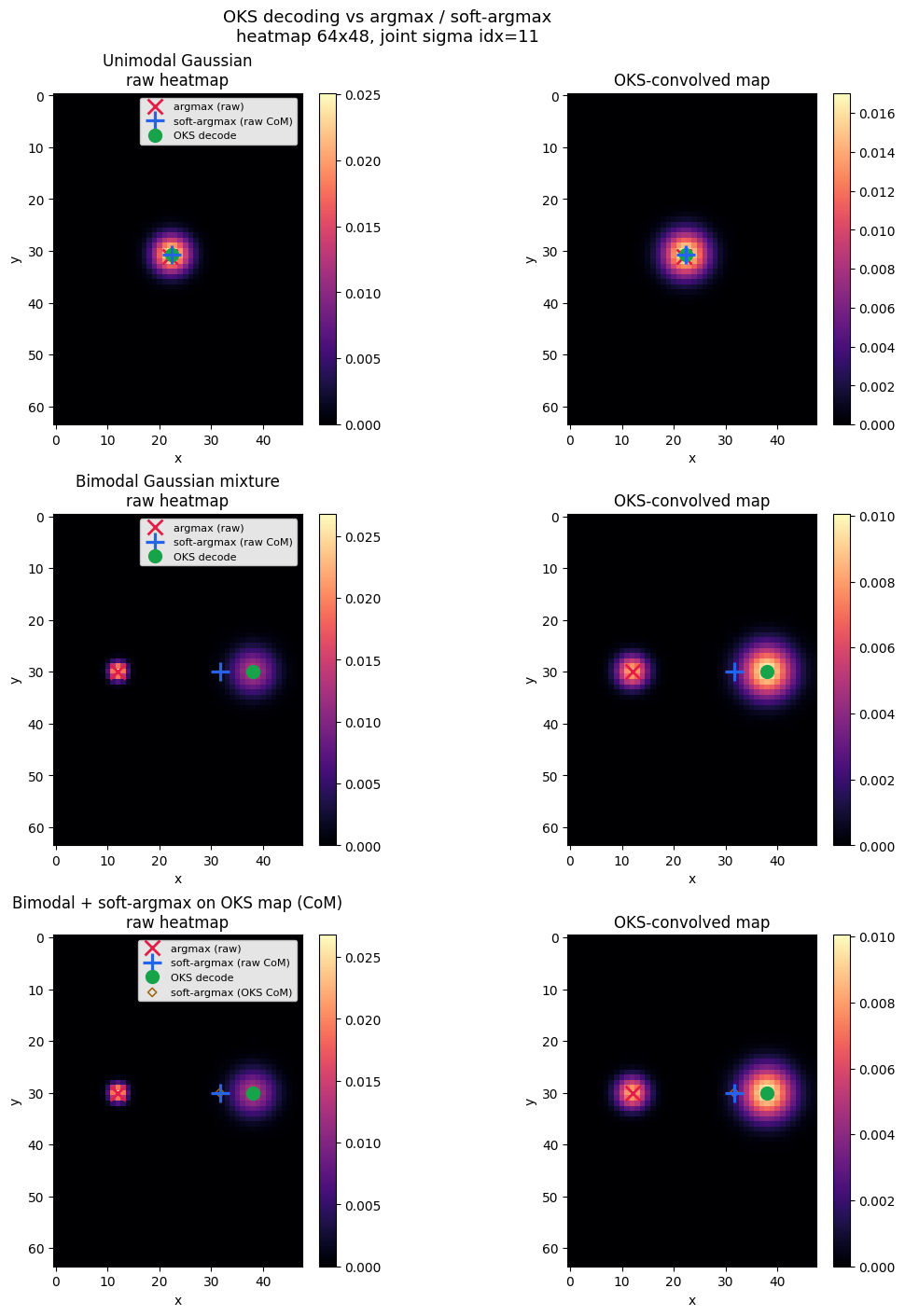}
  \caption{OKS decoding vs.\ argmax and soft-argmax on synthetic heatmaps ($64{\times}48$).
  Left: raw heatmap $\tH$; right: OKS-convolved response $\tS=\tH*\mK$.
  Markers show $\argmax$ on the raw map (red~$\times$), soft-argmax / center of mass on the raw map (blue~$+$), hard OKS decode $\argmax\tS$ (green~$\circ$), and---in the bottom row---soft-argmax on $\tS$ (brown~$\diamond$).
  First row: On a unimodal Gaussian, all estimators coincide.
  Second row: Hard OKS decoding selects the mass-dominant right mode from a bimodal mixture, while raw $\argmax$ prefers the sharper left peak, and raw soft-argmax collapses between modes; 
  Third row: Soft-argmax with $\tau{=}1$ on $\tS$ coincides with Soft-argmax with $\tau{=}1$ on raw $\tH$.}
  \label{fig:oks-decoding}
\end{figure}

\begin{figure}[t]
  \centering
  \includegraphics[width=\linewidth, trim={0 0 0 40}, clip]{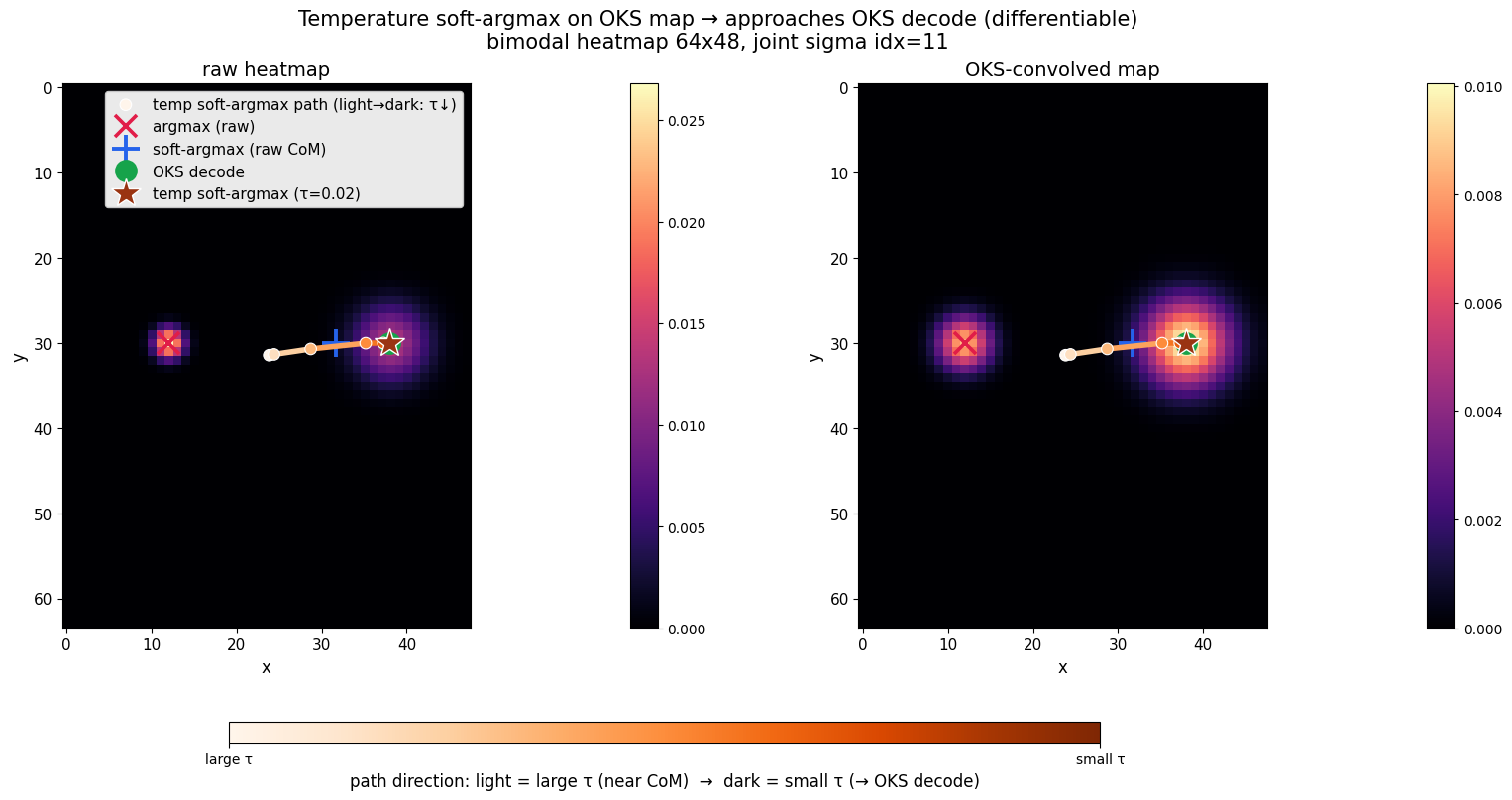}
  \caption{Temperature soft-argmax on a bimodal heatmap approaches hard OKS decode.
  Left: raw $\tH$; right: OKS response $\tS$.
  As temperature $\tau$ decreases (light$\to$dark along the orange path), soft-argmax moves from a broad center of mass toward a mode.
  On $\tS$, the operating point $\tau{=}0.02$ (red star) coincides with hard OKS decode (green~$\circ$), providing a differentiable proxy used in Eq.~\eqref{eq:oks-soft} in the main paper.
  Soft-argmax on raw $\tH$ remains farther from the OKS mode even at the same $\tau$.}
  \label{fig:oks-soft-argmax}
\end{figure}

%\paragraph{Relation to volumetric fusion.}
%Explicit volumetric methods push features into a discretized 3D grid and fuse there~\cite{iskakov2019learnable}.
%The factorized OKS score instead keeps fusion in the continuous domain of projected 2D responses.
%We do not assert that $V\to\infty$ yields a unique Radon reconstruction of $p(\mX)$; with the small $V$ used in practice, MEOM is best viewed as a geometry-aware surrogate likelihood whose empirical benefit is measured in Sec.~\ref{sec:experiments} in the main paper.

\section{Temperature Scaling as a Discrete HDR Recalibrator}
\label{app:hdr}

Sec.~\ref{sec:method-hdr} in the main paper defines highest-density regions and shows that ProbPose map calibration is HDR calibration.
Here we prove the monotonicity claimed there---raising $T$ increases the HDR coverage for all levels $p \in (0,1)$, so a one-dimensional sweep locates the HDR-ECE optimum and serves as a cheaper surrogate for the learned mapping of Chung et al.~\cite{chung2024sampling}, and then bound what that single parameter can achieve, since minimizing this error and attaining exact HDR calibration are requirements at two different levels.

In~\cite{purkrabek2025probpose}, temperature scaling~\cite{guo2017calibration} is used as a post-hoc sharpness knob on a discrete heatmap.
Write $h_i$ for the predicted score of pixel $i\in\{1,\dots,N\}$ (assumed distinct), $g$ for the ground-truth pixel, and
\begin{equation}
A := \{i : h_i > h_g\},
\qquad
U(T) := \sum_{i\in A}\pi_i(T),
\label{eq:hdr-level}
\end{equation}
with $\pi_i(T)\propto\exp(h_i/T)$, so that by Prop.~\ref{prop:probpose-hdr} in the main paper the ground truth lies in $\mathrm{HDR}_x(p)$ if and only if $U(T)<p$.
With $\beta=1/T$, $Z=\sum_i\exp(\beta h_i)$ and $Z_A=\sum_{i\in A}\exp(\beta h_i)$, we have $U=Z_A/Z$ and
\begin{equation}
\frac{\mathrm{d}}{\mathrm{d}\beta}\log\frac{Z_A}{Z}
= \mathbb{E}_A[h]-\mathbb{E}[h] > 0
\label{eq:temp-deriv}
\end{equation}
whenever $A\neq\emptyset$, since every score in $A$ exceeds every score outside it.
Raising $T$ therefore decreases $U$ on every sample simultaneously, from $U\to 1$ as $T\to 0^{+}$ to $U\to|A|/N$ as $T\to\infty$: flattening the softmax draws mass away from the pixels denser than the ground truth, so the share of mass lying above it falls.
Write $\widehat{c}(p;T)$ for the empirical coverage at level $p$, the fraction of samples with $U(T)<p$.
Since every $U$ decreases in $T$, a sample once counted stays counted, so $\widehat{c}(p;\cdot)$ is non-decreasing and crosses the level $p$ at most once.
The per-level calibration error is $\lvert\widehat{c}(p;T)-p\rvert$, an absolute value composed with a monotone function of $T$, hence V-shaped: it falls while $\widehat{c}(p;T)<p$, vanishes at the crossing, and rises once $\widehat{c}(p;T)>p$.
HDR-ECE averages these V-shaped curves over levels whose crossings need not coincide, so monotonicity guarantees an interior optimum reachable by a one-dimensional sweep, though not a unique one in general; on our data it is unique, at $T{\approx}0.5$ (Sec.~\ref{sec:results-hdr} in the main paper).
Per-map crossings need not agree, but modern networks tend to be overconfident or underconfident in the same direction~\cite{guo2017calibration} when tested on the same dataset, so those crossings cluster and the dataset operating interval stays tight.
Thus, a global temperature learned on the validation set can minimize HDR-ECE on the test set, which is why we report HDR-ECE alongside 3D error.

%These two statements live at different levels, and neither implies the other.
%On a single heatmap, $T$ controls $U$ alone, and Eq.~\eqref{eq:temp-deriv} lets it be placed anywhere in $(|A|/N,1)$; the ordering it slides along, however, is frozen, because $t\mapsto\exp(t/T)$ is strictly increasing and hence leaves $A$---and the decoded keypoint---unchanged.
Temperature scaling is not without limitations.
Exact HDR calibration (Eq.~\eqref{eq:hdr-calib} in the main paper) is instead a distributional requirement on the whole dataset, $U\sim\mathrm{Unif}(0,1)$, which asks one shared scalar to satisfy every level at once when the per-sample sets $A$ are fixed and mutually inconsistent.
Minimizing HDR-ECE is thus attainable while uniformity in general is not: a temperature corrects sharpness, but no $T$ moves a ground truth past a competing mode.
Post-hoc calibration of a trained heatmap is bounded by this, which is why we report HDR-ECE alongside 3D error rather than treating a recalibrated map as fully calibrated.
\section{ProbPose fine-tuning on Human3.6M}
\label{app:probpose-ft}

COCO and Human3.6M both use 17 keypoints, but the joint sets and channel orders differ: COCO is organized around face and limb landmarks, whereas Human3.6M follows a kinematic tree with a pelvis root, spine, thorax, and neck.
A COCO-pretrained heatmap head is therefore not aligned with Human3.6M annotations.
We start from the official COCO-pretrained ProbPose-small checkpoint and fine-tune it on the standard Human3.6M training subjects S1, S5, S6, S7, and S8, so that the output heatmaps follow the Human3.6M joint order.
Subjects S9 and S11 are held out for testing; bbox-normalized PCK@0.1 on this split peaks at $0.976$ by epoch~5.

\section{Additional experimental tables}
\label{app:extra-results}

This appendix collects triangulation tables, the end-to-end compute protocol, and temperature-sweep results deferred from Sec.~\ref{sec:experiments} in the main paper.

\subsection{Human3.6M triangulation without 3D labels: full presence/OKS grids}
\label{app:tri-h36m-full}

Tabs.~\ref{tab:tri-every64-full} and~\ref{tab:tri-h36ma-full} expand Tabs.~\ref{tab:tri-every64} in the main paper with presence-probability vs.\ predicted OKS view weighting and DLT~/~refinement ON/OFF toggles.
Presence weighting barely moves reprojection error, while Expected OKS remains best under OKS weights in both splits.

\begin{table}[t]
  \centering
  \caption{Full triangulation ablation on Human3.6M S9/S11 every 64th frame ($N{=}2\,181$). Presence and OKS view weights at DLT~/~refinement. Minimum per column in \textbf{bold}.}
  \label{tab:tri-every64-full}
  \scriptsize
  \setlength{\tabcolsep}{2.5pt}
  \resizebox{\linewidth}{!}{%
  \begin{tabular}{@{}llccc@{}}
    \toprule
    Objective & DLT / Ref. & MPJPE $\downarrow$ & Abs.\ $\downarrow$ & PA $\downarrow$ \\
    \midrule
    Init (DLT only) & \xmark / \xmark & 36.53 & 40.23 & 35.85 \\
    Init (DLT only) & OKS / \xmark & 34.27 & 37.94 & 32.71 \\
    \midrule
    \textit{Reprojection error} & Presence (\xmark/\xmark) & 35.76 & 39.21 & 34.38 \\
    & Presence (\xmark/\cmark) & 35.76 & 39.21 & 34.37 \\
    & Presence (\cmark/\xmark) & 35.76 & 39.21 & 34.38 \\
    & Presence (\cmark/\cmark) & 35.76 & 39.21 & 34.37 \\
    & OKS (\xmark/\xmark) & 35.76 & 39.21 & 34.38 \\
    & OKS (\xmark/\cmark) & 34.67 & 38.10 & 32.88 \\
    & OKS (\cmark/\xmark) & 35.78 & 39.23 & 34.39 \\
    & OKS (\cmark/\cmark) & 34.69 & 38.11 & 32.89 \\
    \midrule
    \textit{Heatmap likelihood} & Presence (\xmark/\xmark) & 69.76 & 61.65 & 62.07 \\
    & Presence (\xmark/\cmark) & 69.78 & 61.66 & 62.12 \\
    & Presence (\cmark/\xmark) & 69.84 & 61.73 & 62.18 \\
    & Presence (\cmark/\cmark) & 69.84 & 61.72 & 62.15 \\
    & OKS (\xmark/\xmark) & 69.76 & 61.65 & 62.07 \\
    & OKS (\xmark/\cmark) & 71.93 & 62.13 & 62.51 \\
    & OKS (\cmark/\xmark) & 68.35 & 60.93 & 60.62 \\
    & OKS (\cmark/\cmark) & 69.79 & 61.48 & 61.31 \\
    \midrule
    \textit{Expected OKS} & OKS (\xmark/\xmark) & 33.12 & 36.81 & 31.51 \\
    & OKS (\xmark/\cmark) & 32.75 & 36.39 & 30.95 \\
    & OKS (\cmark/\xmark) & 32.74 & 36.42 & 30.83 \\
    & OKS (\cmark/\cmark) & \textbf{32.42} & \textbf{36.04} & \textbf{30.38} \\
    \bottomrule
  \end{tabular}%
  }
\end{table}

\begin{table}[t]
  \centering
  \caption{Full triangulation ablation on H36MA ($N{=}6\,238$). Presence and OKS view weights at DLT~/~refinement. Minimum per column in \textbf{bold}.}
  \label{tab:tri-h36ma-full}
  \scriptsize
  \setlength{\tabcolsep}{2.5pt}
  \resizebox{\linewidth}{!}{%
  \begin{tabular}{@{}llccc@{}}
    \toprule
    Objective & DLT / Ref. & MPJPE $\downarrow$ & Abs.\ $\downarrow$ & PA $\downarrow$ \\
    \midrule
    Init (DLT only) & \xmark / \xmark & 42.21 & 46.26 & 43.37 \\
    Init (DLT only) & OKS / \xmark & 37.12 & 41.28 & 36.17 \\
    \midrule
    \textit{Reprojection error} & Presence (\xmark/\xmark) & 40.36 & 44.17 & 40.20 \\
    & Presence (\xmark/\cmark) & 40.36 & 44.17 & 40.19 \\
    & Presence (\cmark/\xmark) & 40.36 & 44.17 & 40.20 \\
    & Presence (\cmark/\cmark) & 40.36 & 44.17 & 40.19 \\
    & OKS (\xmark/\xmark) & 40.36 & 44.17 & 40.20 \\
    & OKS (\xmark/\cmark) & 37.80 & 41.67 & 36.61 \\
    & OKS (\cmark/\xmark) & 40.40 & 44.19 & 40.23 \\
    & OKS (\cmark/\cmark) & 37.84 & 41.70 & 36.63 \\
    \midrule
    \textit{Heatmap likelihood} & Presence (\xmark/\xmark) & 112.94 & 90.36 & 94.49 \\
    & Presence (\xmark/\cmark) & 112.85 & 90.31 & 94.45 \\
    & Presence (\cmark/\xmark) & 112.82 & 90.32 & 94.40 \\
    & Presence (\cmark/\cmark) & 112.86 & 90.30 & 94.42 \\
    & OKS (\xmark/\xmark) & 112.94 & 90.36 & 94.49 \\
    & OKS (\xmark/\cmark) & 116.99 & 91.90 & 95.48 \\
    & OKS (\cmark/\xmark) & 111.62 & 90.58 & 92.94 \\
    & OKS (\cmark/\cmark) & 115.10 & 92.25 & 94.21 \\
    \midrule
    \textit{Expected OKS} & OKS (\xmark/\xmark) & 35.29 & 39.26 & 34.08 \\
    & OKS (\xmark/\cmark) & 34.67 & 38.47 & 32.92 \\
    & OKS (\cmark/\xmark) & 34.28 & 38.31 & 32.30 \\
    & OKS (\cmark/\cmark) & \textbf{33.80} & \textbf{37.72} & \textbf{31.55} \\
    \bottomrule
  \end{tabular}%
  }
\end{table}

\subsection{CMU Panoptic triangulation ablation}
\label{app:cmu-tri-mrcnn}

Tab.~\ref{tab:cmu-tri-mrcnn-bbox} expands the CMU comparison of Sec.~\ref{sec:cmu-tri-ablation} in the main paper with the full objective and view-weighting grid under MRCNN boxes.
The absolute MPJPE $23.99$\,mm reported in the main text is the Expected-OKS row with OKS weighting at both DLT and refinement.

\begin{table}[t]
  \centering
  \caption{Triangulation ablation on CMU Panoptic validation~\cite{iskakov2019learnable} --- MRCNN bounding boxes ($N=8\,110$). Pre-computed Mask R-CNN detections from~\cite{iskakov2019learnable}. Four views per sample. View-weight columns list DLT / refinement (ON = \cmark, OFF = \xmark). Init: unweighted and OKS-weighted DLT (no refinement). Minimum per metric column in \textbf{bold}.}
  \label{tab:cmu-tri-mrcnn-bbox}
  \scriptsize
  \setlength{\tabcolsep}{2.5pt}
  \resizebox{\linewidth}{!}{%
  \begin{tabular}{@{}llccc@{}}
    \toprule
    Objective & DLT / Ref. & MPJPE $\downarrow$ & Abs.\ $\downarrow$ & PA $\downarrow$ \\
    \midrule
    Init (DLT only) & \xmark / \xmark & 50.74 & 37.18 & 42.45 \\
    Init (DLT only) & OKS / \xmark & 44.86 & 28.52 & 31.41 \\
    \midrule
    \textit{Reprojection error} & Presence (\xmark/\xmark) & 49.10 & 35.65 & 41.47 \\
     & Presence (\xmark/\cmark) & 49.06 & 35.60 & 41.39 \\
     & Presence (\cmark/\xmark) & 49.10 & 35.64 & 41.45 \\
     & Presence (\cmark/\cmark) & 49.05 & 35.58 & 41.37 \\
     & OKS (\xmark/\xmark) & 49.10 & 35.65 & 41.47 \\
     & OKS (\xmark/\cmark) & 46.14 & 31.00 & 35.70 \\
     & OKS (\cmark/\xmark) & 49.18 & 35.82 & 41.59 \\
     & OKS (\cmark/\cmark) & 46.05 & 31.15 & 35.70 \\
    \midrule
    \textit{Heatmap likelihood} & Presence (\xmark/\xmark) & 128.11 & 113.65 & 138.97 \\
     & Presence (\xmark/\cmark) & 128.01 & 113.54 & 138.89 \\
     & Presence (\cmark/\xmark) & 127.74 & 113.24 & 138.65 \\
     & Presence (\cmark/\cmark) & 127.72 & 113.22 & 138.64 \\
     & OKS (\xmark/\xmark) & 128.11 & 113.65 & 138.97 \\
     & OKS (\xmark/\cmark) & 133.89 & 119.25 & 145.01 \\
     & OKS (\cmark/\xmark) & 128.05 & 113.62 & 140.53 \\
     & OKS (\cmark/\cmark) & 133.96 & 119.57 & 147.10 \\
    \midrule
    \textit{Expected OKS} & OKS (\xmark/\xmark) & 42.80 & 24.95 & 26.47 \\
     & OKS (\xmark/\cmark) & 42.71 & 24.39 & 25.82 \\
     & OKS (\cmark/\xmark) & \textbf{42.29} & 24.40 & 25.64 \\
     & OKS (\cmark/\cmark) & 42.33 & \textbf{23.99} & \textbf{25.18} \\
    \bottomrule
  \end{tabular}%
  }
\end{table}

\subsection{End-to-end MSE weight ablation}
\label{app:h36m-mse-weight}

Tab.~\ref{tab:h36m-mse-weight} expands Tab.~\ref{tab:h36m-oks-mse} in the main paper with Stage~2 joint MSE+MEOM fine-tuning under MEOM weight $1.0$ and $\lambda_{\mathrm{MSE}}\in\{10,50,100,200,500\}$.
Relative MPJPE is best at $\lambda_{\mathrm{MSE}}{=}100$; absolute MPJPE is marginally best at $\lambda_{\mathrm{MSE}}{=}500$.

\begin{table}[t]
  \centering
  \caption{Human3.6M end-to-end Stage~2 MSE weight ablation on the undistorted protocol of Sec.~\ref{sec:datasets} in the main paper (every 5th frame). Absolute and root-relative MPJPE in millimeters (lower is better). Soft Expected-OKS decoding ($\tau{=}0.02$); MEOM weight $1.0$. Official baselines and Stage~1 repeated from Tab.~\ref{tab:h36m-oks-mse}. Best algebraic numbers in each column are in \textbf{bold}.}
  \label{tab:h36m-mse-weight}
  \footnotesize
  \setlength{\tabcolsep}{3.5pt}
  \begin{tabular}{@{}lcc@{}}
    \toprule
    Method & Abs.\ MPJPE $\downarrow$ & Rel.\ MPJPE $\downarrow$ \\
    \midrule
    Iskakov et al.\ (volumetric)~\cite{iskakov2019learnable}
      & $19.17$ & $21.00$ \\
    Iskakov et al.\ (algebraic)~\cite{iskakov2019learnable}
      & $19.89$ & $22.59$ \\
    \midrule
    hm$\times$1 Stage~1 (MSE, soft Exp-OKS)
      & $19.38$ & $22.15$ \\
    hm$\times$1 Stage~2 (MSE+MEOM, $\lambda_{\mathrm{MSE}}{=}10$)
      & $19.30$ & $22.12$ \\
    hm$\times$1 Stage~2 (MSE+MEOM, $\lambda_{\mathrm{MSE}}{=}50$)
      & $19.20$ & $21.98$ \\
    hm$\times$1 Stage~2 (MSE+MEOM, $\lambda_{\mathrm{MSE}}{=}100$)
      & $19.11$ & $\mathbf{21.86}$ \\
    hm$\times$1 Stage~2 (MSE+MEOM, $\lambda_{\mathrm{MSE}}{=}200$)
      & $19.24$ & $22.10$ \\
    hm$\times$1 Stage~2 (MSE+MEOM, $\lambda_{\mathrm{MSE}}{=}500$)
      & $\mathbf{19.10}$ & $22.03$ \\
    \bottomrule
  \end{tabular}
\end{table}

\subsection{Computational cost of 2D-to-3D lifting}
\label{app:flops-lift}

Tab.~\ref{tab:h36m-oks-mse} in the main paper reports full-pipeline GFLOPs for the three end-to-end methods that share a ResNet-152 heatmap backbone on $384\times 384$ crops from four views.
The distinguishing computation is the lift from heatmaps (or deconvolution features) to 3D joints: confidence-weighted DLT after soft-argmax in the algebraic baseline of Iskakov et al.~\cite{iskakov2019learnable}, the same DLT after Expected-OKS decoding (Eq.~\eqref{eq:oks-soft} in the main paper) in our method, and dense unprojection followed by a V2V 3D CNN in the volumetric model~\cite{iskakov2019learnable}.
Stage~1 and Stage~2 use the same Expected-OKS\,+\,DLT inference graph, so they share the $158.2$\,GFLOPs entry.

% Queued here so the float ships at the top of the following page (D.5), not on a trailing figures-only page.
\begin{figure*}[t]
  \centering
  \includegraphics[width=\linewidth,height=0.38\textheight,keepaspectratio]{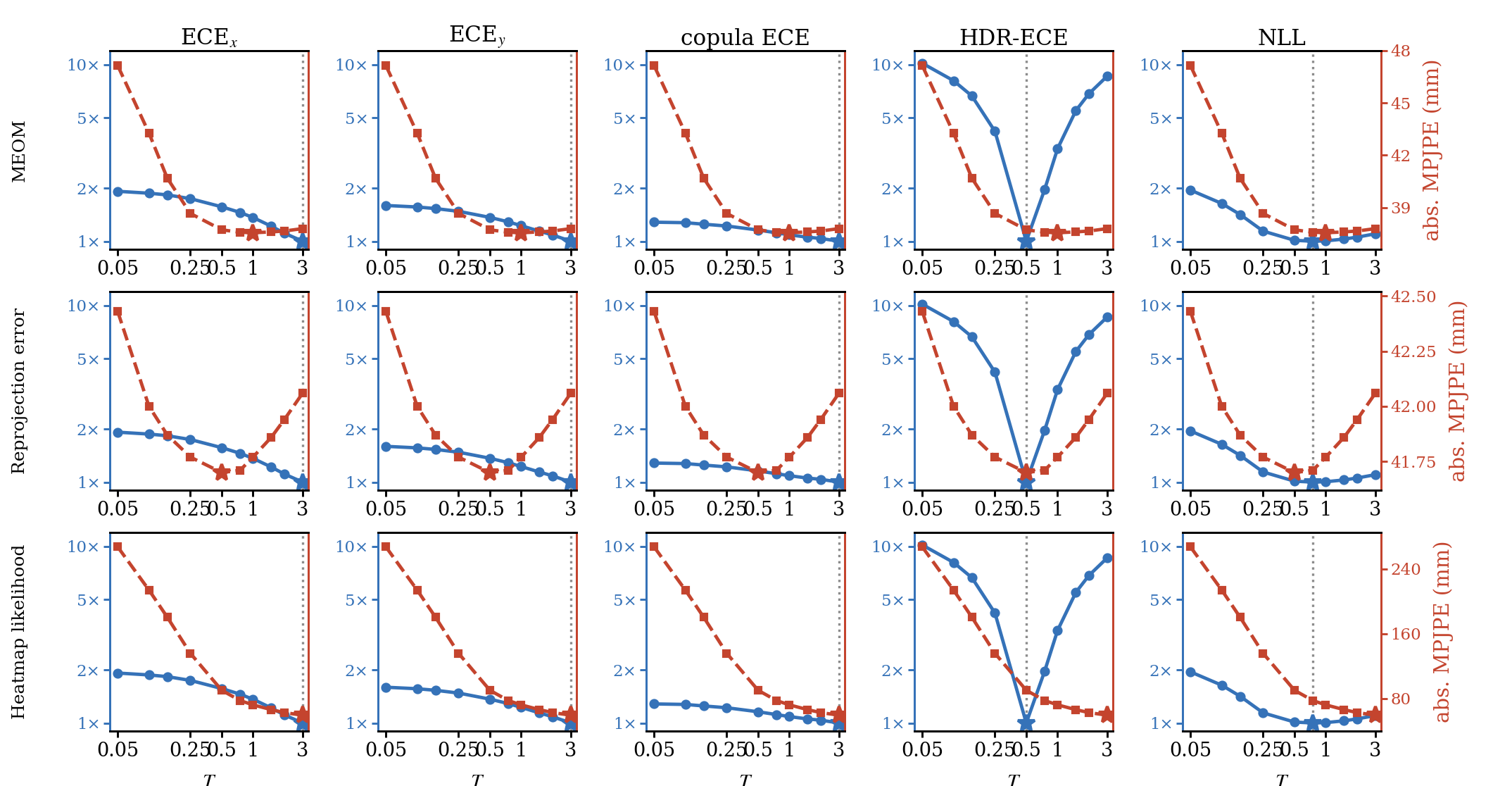}
  \caption{Same H36MA temperature sweep as Fig.~5 in the main paper, with every left axis showing the calibration score relative to its minimum on a shared log scale ($1{\times}$--$10{\times}$).
  HDR-ECE spans an order of magnitude; ECE, copula ECE, and NLL remain within about $2{\times}$ of their minima, and NLL is nearly flat near the MEOM optimum.}
  \label{fig:temp-corr-h36ma-shared}
\end{figure*}

To isolate the lift we wrap each as a standalone module, feed dummy tensors with the Human3.6M inference shapes (batch size $1$, four views, $96\times 96$ heatmaps, $64^{3}$ volume), and count multiply--adds with the fvcore flop counter.
Peak allocated GPU memory is recorded by resetting the CUDA peak tracker immediately before a forward pass and reading the subsequent maximum; latency is the mean of $30$ CUDA-event timings after warmup on an NVIDIA A100.

Tab.~\ref{tab:flops-lift} reports the lift alone and the implied full image-to-3D budget obtained by adding the shared 2D backbone ($\approx 158$\,GFLOPs for the algebraic models; $152$\,GFLOPs for volumetric triangulation, which omits the algebraic confidence head).
Analytic convolution counts agree with fvcore on the OKS kernels and on the V2V network.
Expected-OKS decoding adds a per-joint convolution of $0.162$\,GFLOPs---two orders of magnitude above algebraic soft-argmax and DLT ($1.3$\,MFLOPs), yet $919\times$ cheaper than volumetric unprojection and V2V ($149$\,GFLOPs).
Consequently the full algebraic MSE+MEOM pipeline remains at $158$\,GFLOPs, whereas volumetric triangulation reaches $301$\,GFLOPs.
Wall-clock time and memory follow the same ordering: the Expected-OKS lift runs in $1.9$\,ms with $12$\,MB peak allocation, versus $23$\,ms and $547$\,MB for the volumetric lift.

\begin{table}[h]
  \centering
  \caption{Cost of the 2D-to-3D lift with the shared ResNet-152 backbone excluded (batch size $1$, four $384\times 384$ views, $96\times 96$ heatmaps, $64^{3}$ volume). Multiply--adds are counted by fvcore. Full GFLOPs add the 2D backbone measured on the same dummy input. Latency and peak allocated memory are measured on an NVIDIA A100. Algebraic: soft-argmax\,+\,DLT. Ours: Expected-OKS convolution\,+\,temperature soft-argmax\,+\,DLT. Volumetric: $1{\times}1$ feature projection, unprojection, and V2V~\cite{iskakov2019learnable}.}
  \label{tab:flops-lift}
  \scriptsize
  \setlength{\tabcolsep}{2.5pt}
  \resizebox{\linewidth}{!}{%
  \begin{tabular}{@{}lcccc@{}}
    \toprule
    Method & Lift & Full & Latency & Peak mem. \\
     & (GFLOPs) & (GFLOPs) & (ms) & (MB) \\
    \midrule
    Algebraic (soft-argmax\,+\,DLT)
      & $0.001$ & $158.0$ & $0.74$ & $4.9$ \\
    Ours (Expected-OKS\,+\,DLT)
      & $0.162$ & $158.2$ & $1.91$ & $12.0$ \\
    Volumetric (unproject\,+\,V2V)~\cite{iskakov2019learnable}
      & $149.0$ & $301.4$ & $22.67$ & $547.3$ \\
    \bottomrule
  \end{tabular}%
  }
\end{table}

\subsection{H36MA temperature sweep}
\label{app:temp-h36ma}

Sec.~\ref{sec:temp-sweep} in the main paper uses $T\in\{0.05,0.1,0.15,0.25,0.5,0.75,1.0,1.5,2.0,3.0\}$.
For each $T$ we evaluate heatmap calibration on visible joints (visibility~$=2$): axis-wise ECE, copula ECE, HDR-ECE, NLL, and energy score; and algebraic DLT followed by 80-step refinement under the three objectives of Sec.~\ref{sec:tri-ablation} in the main paper, with view weighting on at both stages (expected OKS and reprojection error use OKS weights; heatmap likelihood uses presence).
Reported 3D errors are absolute MPJPE after refinement.
Tab.~\ref{tab:temp-h36ma} lists per-temperature calibration metrics and refined absolute MPJPE for the H36MA sweep discussed in Sec.~\ref{sec:temp-sweep} in the main paper.
Fig.~\ref{fig:temp-corr-h36ma-shared} replots the same dual-axis curves with a shared relative left axis ($1{\times}$--$10{\times}$) so that basin amplitudes can be compared across scores.

\begin{table}[h]
  \centering
  \caption{H36MA temperature sweep ($N{=}6\,238$). Inference-time ProbMapHead temperature $T$ vs.\ heatmap calibration and refined absolute MPJPE (mm) for expected OKS (ExpOKS), reprojection error (Repro), and heatmap likelihood (Lik), all with DLT/refine view weighting ON. Minimum Abs.\ MPJPE per objective in \textbf{bold}.}
  \label{tab:temp-h36ma}
  \footnotesize
  \setlength{\tabcolsep}{3.2pt}
  \begin{tabular}{@{}c ccccc ccc@{}}
    \toprule
    $T$ & ECE$_x$ & ECE$_y$ & Copula & HDR & NLL & ExpOKS & Repro & Lik \\
    \midrule
    0.05 & 0.328 & 0.354 & 0.309 & 0.265 & 7.015 & 47.15 & 42.43 & 267.24 \\
    0.10 & 0.320 & 0.347 & 0.307 & 0.211 & 5.892 & 43.27 & 42.00 & 213.90 \\
    0.15 & 0.313 & 0.340 & 0.301 & 0.173 & 5.104 & 40.68 & 41.87 & 180.41 \\
    0.25 & 0.298 & 0.328 & 0.294 & 0.109 & 4.108 & 38.66 & 41.77 & 135.88 \\
    0.50 & 0.268 & 0.303 & 0.279 & \textbf{0.026} & 3.645 & 37.72 & \textbf{41.70} & 90.30 \\
    0.75 & 0.249 & 0.286 & 0.269 & 0.051 & \textbf{3.600} & 37.58 & 41.71 & 77.67 \\
    1.00 & 0.233 & 0.272 & 0.262 & 0.087 & 3.618 & \textbf{37.55} & 41.77 & 72.30 \\
    1.50 & 0.208 & 0.253 & 0.254 & 0.143 & 3.704 & 37.60 & 41.86 & 66.39 \\
    2.00 & 0.190 & 0.241 & 0.250 & 0.178 & 3.798 & 37.65 & 41.94 & 62.79 \\
    3.00 & 0.171 & 0.222 & 0.241 & 0.224 & 3.968 & 37.79 & 42.06 & \textbf{59.55} \\
    \bottomrule
  \end{tabular}
\end{table}

%\begin{figure*}[t]
%  \centering
%  \includegraphics[width=\linewidth]{figs/temperature_correlation_h36ma.png}
%  \caption{H36MA: refined absolute MPJPE vs.\ calibration metrics across ProbMapHead temperatures (point labels~$=T$).
%  Rows: expected OKS, reprojection error, and heatmap-likelihood refinement.
%  Titles report Pearson $r$ between each calibration score and absolute MPJPE.
%  NLL aligns most strongly with expected-OKS and likelihood errors.
%  A linear coefficient understates HDR-ECE, which is unimodal in $T$ rather than monotone; on the under-dispersed branch $T{\le}0.5$ it reaches $r{=}0.938$ against expected OKS, and it is the strongest predictor of reprojection error over the full sweep ($r{=}0.896$).}
%  \label{fig:temp-corr-h36ma-full}
%\end{figure*}

\subsection{Likelihood under stronger softening}
\label{app:temp-lik-h36ma}

Because likelihood refinement continued to improve up to $T{=}3$ in Sec.~\ref{sec:temp-sweep} in the main paper, we extended the H36MA likelihood sweep to $T\in\{4,5,6,8,10,15,20\}$ (Fig.~\ref{fig:temp-lik-h36ma}).
Absolute MPJPE decreases further and plateaus near $56.4\,\mathrm{mm}$ at $T{=}20$, still ${\approx}19\,\mathrm{mm}$ above the best expected-OKS result ($37.5\,\mathrm{mm}$ at $T{=}1.0$).
Thus temperature can partially mitigate the mismatch between raw heatmap likelihood and 3D accuracy, but expected-OKS decoding remains substantially more robust and accurate under the same ProbPose predictions.

\begin{figure}[!ht]
  \centering
  \includegraphics[width=0.85\linewidth]{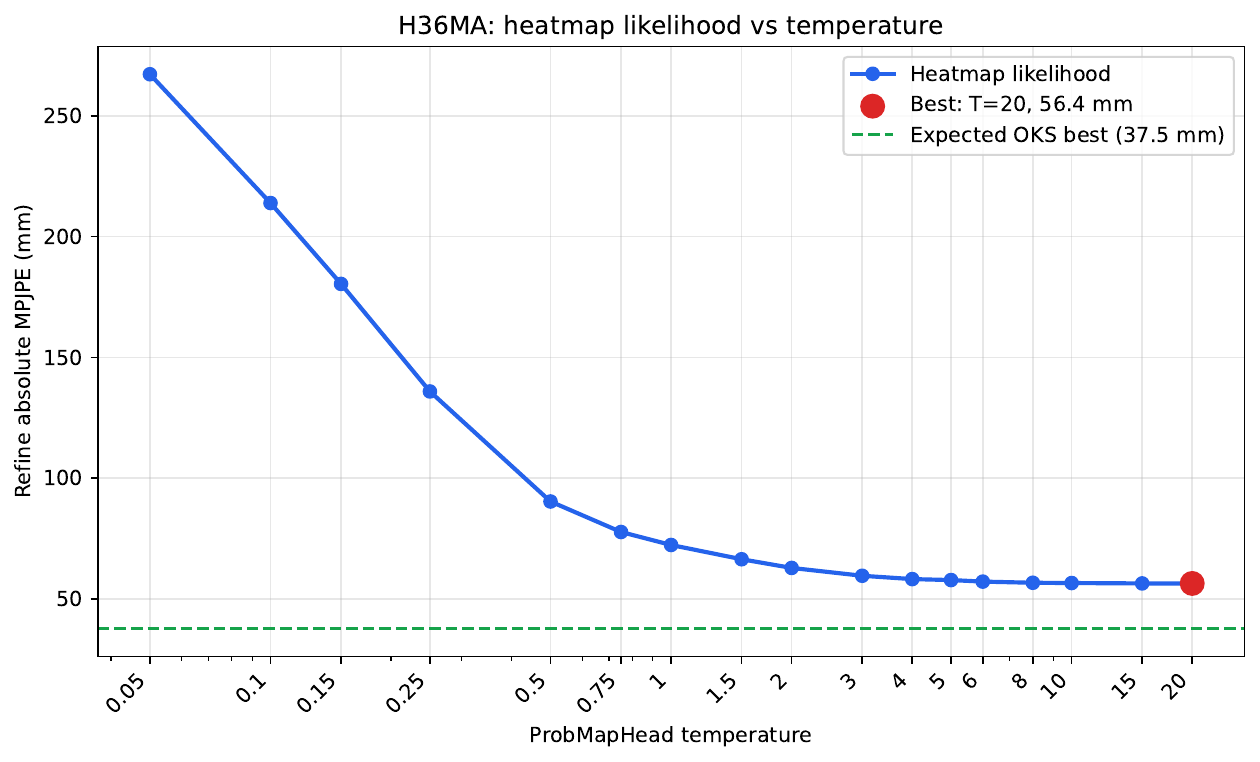}
  \caption{H36MA: heatmap-likelihood absolute MPJPE vs.\ ProbMapHead temperature (log $x$-axis), extended to $T{=}20$.
  The dashed line marks the best expected-OKS absolute MPJPE over $T{\in}[0.05,3]$ ($37.5\,\mathrm{mm}$).
  Likelihood improves with softening but saturates well above expected OKS.}
  \label{fig:temp-lik-h36ma}
\end{figure}

\end{document}